\definecolor{DarkGreen}{RGB}{0,120,0}
\algrenewcommand\algorithmiccomment[1]{\hfill\textcolor{DarkGreen}{\(\triangleright\) #1}}
\algrenewcommand\algorithmiccomment[1]{%
  \hfill\(\triangleright\)\,%
  \textcolor{DarkGreen}{\parbox[t]{.45\linewidth}{\RaggedRight #1}}}
\title{TADA: Improved Diffusion Sampling with \underline{T}raining-free \underline{A}ugmented \underline{D}yn\underline{A}mics}
\author{%
  Tianrong Chen, Huangjie Zheng, David Berthelot, Jiatao Gu, Josh Susskind, Shuangfei Zhai\\
  Apple\\
  \texttt{\{tchen54,huangjie\_zheng,dberthelot,jgu32, jsusskind,szhai\}@apple.com} \\
}
\def\eqref#1{(\ref{#1})}
\def\1{\bm{1}}
\def\rd{{\textnormal{d}}}
\def\rva{{\mathbf{a}}}
\def\rvb{{\mathbf{b}}}
\def\rve{{\mathbf{e}}}
\def\rvg{{\mathbf{g}}}
\def\rvh{{\mathbf{h}}}
\def\rvm{{\mathbf{m}}}
\def\rvr{{\mathbf{r}}}
\def\rvx{{\mathbf{x}}}
\def\rmA{{\mathbf{A}}}
\def\rmI{{\mathbf{I}}}
\def\rmL{{\mathbf{L}}}
\def\mA{{\bm{A}}}
\def\mI{{\bm{I}}}
\def\mL{{\bm{L}}}
\DeclareMathAlphabet{\mathsfit}{\encodingdefault}{\sfdefault}{m}{sl}
\SetMathAlphabet{\mathsfit}{bold}{\encodingdefault}{\sfdefault}{bx}{n}
\def\gL{{\mathcal{L}}}
\newcommand{\pdata}{p_{\rm{data}}}
\newcommand{\E}{\mathbb{E}}
\newcommand{\xt}[1]{x_t^{(#1)}}
\newcommand{\norm}[1]{\lVert#1\rVert}
\newcommand{\T}{\mathsf{T}}
\def\bmut{{\boldsymbol{\mu}_t}}
\def\bmu{{\boldsymbol{\mu}}}
\def\bSigmat{{\boldsymbol{\Sigma}_t}}
\def\bSigma{{\boldsymbol{\Sigma}}}
\def\Lxxt{{L^{xx}_t}}
\def\Lxvt{{L^{xv}_t}}
\def\Lvvt{{L^{vv}_t}}
\def\beps{{\boldsymbol{\epsilon}}}
\colorlet{color1}{green!50!black}
\colorlet{color2}{orange!95!black}
\colorlet{color3}{red!80!black}
\colorlet{color4}{red!65!black}
\colorlet{color5}{blue!75!green}
\colorlet{blueee}{blue!50!black}
\definecolor{amaranth}{rgb}{0.9, 0.17, 0.31}
\let\oldsqrt\sqrt
\def\sqrt{\mathpalette\DHLhksqrt}
\def\DHLhksqrt#1#2{\setbox0=\hbox{$#1\oldsqrt{#2\,}$}\dimen0=\ht0
\advance\dimen0-0.2\ht0
\setbox2=\hbox{\vrule height\ht0 depth -\dimen0}%
{\box0\lower0.4pt\box2}}
\begin{document}

\theoremstyle{plain}
\newtheorem{theorem}{Theorem}[section]
\newtheorem{proposition}[theorem]{Proposition}
\newtheorem{lemma}[theorem]{Lemma}
\newtheorem{corollary}[theorem]{Corollary}
\theoremstyle{definition}
\newtheorem{definition}[theorem]{Definition}
\newtheorem{assumption}[theorem]{Assumption}
\theoremstyle{plain}
\newtheorem{remark}[theorem]{Remark}
\newcommand{\method}{TADA }
\newcommand{\methodfull}{Training-free Augmented DynAmics (TADA) }
\newcommand{\db}[1]{\textcolor{red}{DB: #1 }}

\maketitle

\begin{abstract}
    Diffusion models have demonstrated exceptional capabilities in generating high-fidelity images but typically suffer from inefficient sampling. 
    Many solver designs and noise scheduling strategies have been proposed to dramatically improve sampling speeds.
    In this paper, we introduce a new sampling method that is up to 186\% faster than the current state of the art solver for comparative FID on ImageNet512.
    This new sampling method is training-free and uses an ordinary differential equation (ODE) solver.
    The key to our method resides in using higher-dimensional initial noise, allowing to produce more detailed samples with less function evaluations from existing pretrained diffusion models.
    In addition, by design our solver allows to control the level of detail through a simple hyper-parameter at no extra computational cost.
    We present how our approach leverages momentum dynamics by establishing a fundamental equivalence between momentum diffusion models and conventional diffusion models with respect to their training paradigms.
    Moreover, we observe the use of higher-dimensional noise naturally exhibits characteristics similar to stochastic differential equations (SDEs).
    Finally, we demonstrate strong performances on a set of representative pretrained diffusion models, including EDM, EDM2, and Stable-Diffusion 3, which cover models in both pixel and latent spaces, as well as class and text conditional settings. The code is available at \href{https://github.com/apple/ml-tada}{https://github.com/apple/ml-tada}.

\end{abstract}

\section{Introduction}

Diffusion Models (DMs; \citet{song2020score}; \citet{ho2020denoising}) and Flow Matching ~\citep{lipman2022flow,liu2022flow} are foundational techniques in generative modeling, widely recognized for their impressive scalability and capability to generate high-resolution, high-fidelity images ~\citep{betker2023improving,podell2023sdxl,esser2024scaling}.
Both share a common underlying mathematical structure and generate data by iteratively denoising Gaussian noise through a solver.

Sampling from a diffusion model is typically discretized through a multi-step noise schedule. 
Denoising Diffusion Probabilistic Models sample new noise at every step resulting in a process interpretable as discretized solutions of diffusion Stochastic Differential Equations (SDEs).
Meanwhile, Denoising Diffusion Implicit Models only sample initial noise which is reused at every step resulting in a process interpretable as discretized solutions of Ordinary Differential Equations (ODEs). 
Both ODE and SDE solvers should perform similarly since they merely represent different interpretations of the same Fokker-Planck Partial Differential Equation. 
In practice, however, ODE solvers often yield lower‐fidelity results than SDE solvers as evidenced by the FID scores in ~\cite{karras2022elucidating} when sufficient function evaluations are performed and model capacity is a limiting factor.
On the other hand, SDE solvers require finer steps due to the noise injection at every step, which gets amplified through discretization.

Generating high-quality images using DMs often necessitates a high number of steps and therefore a high Number of Function Evaluations (NFEs), substantially increasing computational costs compared to alternative generative approaches such as generative adversarial networks (GANs) ~\citep{goodfellow2014generative}.
Consequently, a large research effort has been focused on reducing the NFEs to reach a desired level of sample quality and this is also the focus of our paper.
Recent efforts ~\cite{karras2022elucidating,xue2023sa} have significantly improved the efficiency of SDE solvers, achieving compelling results with a moderate number of function evaluations.
Designing training-free faster solvers by leveraging numerical integration techniques has also received a lot of attention: exponential integrators ~\citep{hochbruck2010exponential} and multi-step methods ~\citep{atkinson2009numerical}, and hybrid combinations of these ~\citep{lu2022dpm,lu2022dpmpp,zhao2023unipc,zhang2022fast}.
Remarkably, faster solvers reach near-optimal quality with around 30 NFEs and retain acceptable image generation capabilities even with $\text{NFE}<10$.

To further enhance performance, optimization-based solvers ~\citep{zhao2024dc,zheng2023dpm} have advanced the capabilities of the aforementioned methods, achieving strong results with as few as 5 NFEs. In parallel, training-based dynamical distillation approaches ~\citep{watson2021learning,salimans2022progressive,song2023consistency,berthelot2023tract} have demonstrated effectiveness in reducing the number of function evaluations and can similarly benefit from the incorporation of fast solvers.

In this paper, we propose a new sampling method that is 186\% faster than the current state of the art for generating 512x512 ImageNet samples with an FID of $2$.
Our method is orthogonal to existing diffusion model sampling techniques, allowing seamless integration with advanced solvers and classifier-free guidance (CFG) schedules simply by transitioning from standard ODE to momentum ODE frameworks.
The proposed method is based on momentum diffusion models ~\cite{chen2023generative} and on the observation ~\citep{chen2023deep} that they can achieve competitive performance with a very low number of function evaluations.
Specifically our paper extends the aforementioned works as follows:
    \begin{enumerate}[leftmargin=0.75cm]
    \item We prove the training equivalence between momentum diffusion models ~\citep{dockhorn2021score,chen2023generative} and conventional diffusion models, modulo a transformation of the input variables to the neural network. Consequently, the equivalence shows that simple noise or noisy data augmentation together with diffusion loss, including but not limited to momentum diffusion, offers \textbf{no training benefit}. Readers may extend this reasoning to their own settings and verify the algorithms relevant to them.
    \item  We then shift our attention to the \textbf{sampling phase} of the momentum system. We propose a new sampling method named \methodfull that uses higher-dimensional input noise and yet enables direct reuse of pretrained diffusion models.
    \item Although no benefits are observed in the training phase in theory, we find that our proposed method, rooted in the momentum system, inherits advantageous SDE properties while employing ODE solvers, enabling both diverse generation and accelerated sampling through momentum dynamics.
    \item We illustrate the superior performance of our approach through extensive experimentation.
    \end{enumerate}

\section{Preliminary}
\begin{figure}[t]
    \centering
    \includegraphics[width=\linewidth]{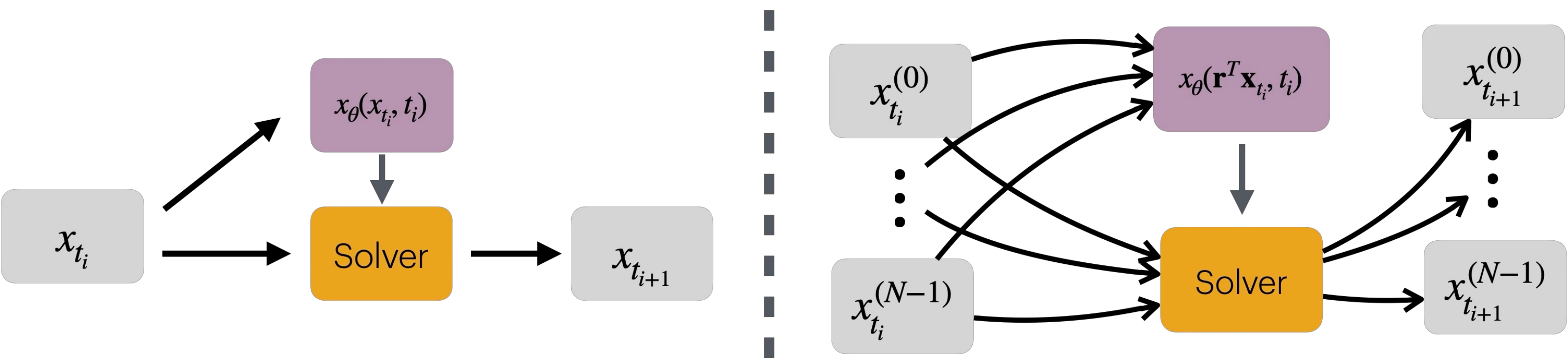}
    \caption{Here we show the distinctions between conventional diffusion models (\emph{left}) and momentum diffusion models (\emph{right}) during sampling. Leveraging Prop~\ref{prop:mdm-training}, we demonstrate that pretrained diffusion models with \emph{$x_0$-prediction} $x_{\theta}(\cdot,\cdot)$ can be directly applied to propagate the momentum system with multiple varible $\{\rvx_{t_i}^(n)\}_0^{N-1}$. Moreover, the choice of numerical solver remains flexible. See Sec~\ref{sec:N-var-sampling} for details.}
    \label{fig:prop-demo}
\end{figure}


First, we cover conventional diffusion models and flow matching, we then follow up by introducing momentum diffusion models and we finish this section with exponential integrators.
In the rest of this paper, we follow the flow matching literature convention for the direction of distribution transport, e.g. from the prior distribution at time $t = 0$ to the data distribution at time $t = 1$. 

\subsection{Diffusion Models and Flow Matching}
In Diffusion Models (DM) and Flow Matching (FM), intermediate states $x_t\in\mathbb{R}^d$ are sampled from a tractable transition probability conditioned on an initial data point $x_1 \sim \pdata$ during training:
\begin{align}\label{eq:1st-order-tran-prob}
x_t = \mu_t x_1 + \sigma_t \epsilon, \quad \epsilon \sim \mathcal{N}(0, \mI_d).
\end{align}
The coefficients $\mu_t\in\mathbb{R}$ and $\sigma_t\in\mathbb{R}$ differ depending on the specific model parameterization: mean preserving, variance preserving or variance exploding. 
Regardless of these differences, the training objectives across various methods remain identical.

For the sake of clarity, we illustrate parameterizing the \emph{noise prediction model} with a neural network with learnable weights $\theta$.
In this case the objective function can be expressed as:
\begin{align*}
\min_{\theta}\mathcal{L}_{\text{DM}}(\theta) \coloneqq \E_{x_1,\epsilon,t} \norm{\epsilon_{\theta}(x_t,t)-\epsilon}_2^2.
\end{align*}

Alternatively, using the linear relationship given by eq.\ref{eq:1st-order-tran-prob}, we could just as easily parameterize the \emph{data prediction model} as $x_{\theta}(x_t,t) := (x_t - \sigma_t \epsilon_\theta(x_t,t))/\mu_t$, or even as a velocity prediction model. 

Without loss of generality, we define a force term $F_{\theta}:\mathbb{R}^d\times [0,1]\rightarrow \mathbb{R}^d$ as a linear combination of the learned noise $\epsilon_{\theta}$ and the state $x_t$, which pushes forward $x_t$ from the prior towards the $\pdata$ from $t=0$ to $t=1$. 
During sampling, we use this force $F_\theta$ to define the following probabilistic flow:
\begin{align}\label{eq:1st-ODE}
\frac{\rd x_t}{\rd t} = a_t x_t + b_t F_{\theta}(x_t,t), \quad x_0 \sim \mathcal{N}(0, \sigma^2_0\mI_d),
\end{align}
where $a_t\in\mathbb{R}$, and $b_t\in\mathbb{R}$ are time-varying coefficients defined by specific parameterization of $F_{\theta}$.
For example, for a FM parameterization: $F_\theta(x_t,t) = (x_{\theta}(x_t,t)-x_t)/(1-t)$, $a_t=0$ and $b_t=1$. 

\begin{figure}[t]
    \centering
    \includegraphics[width=\linewidth]{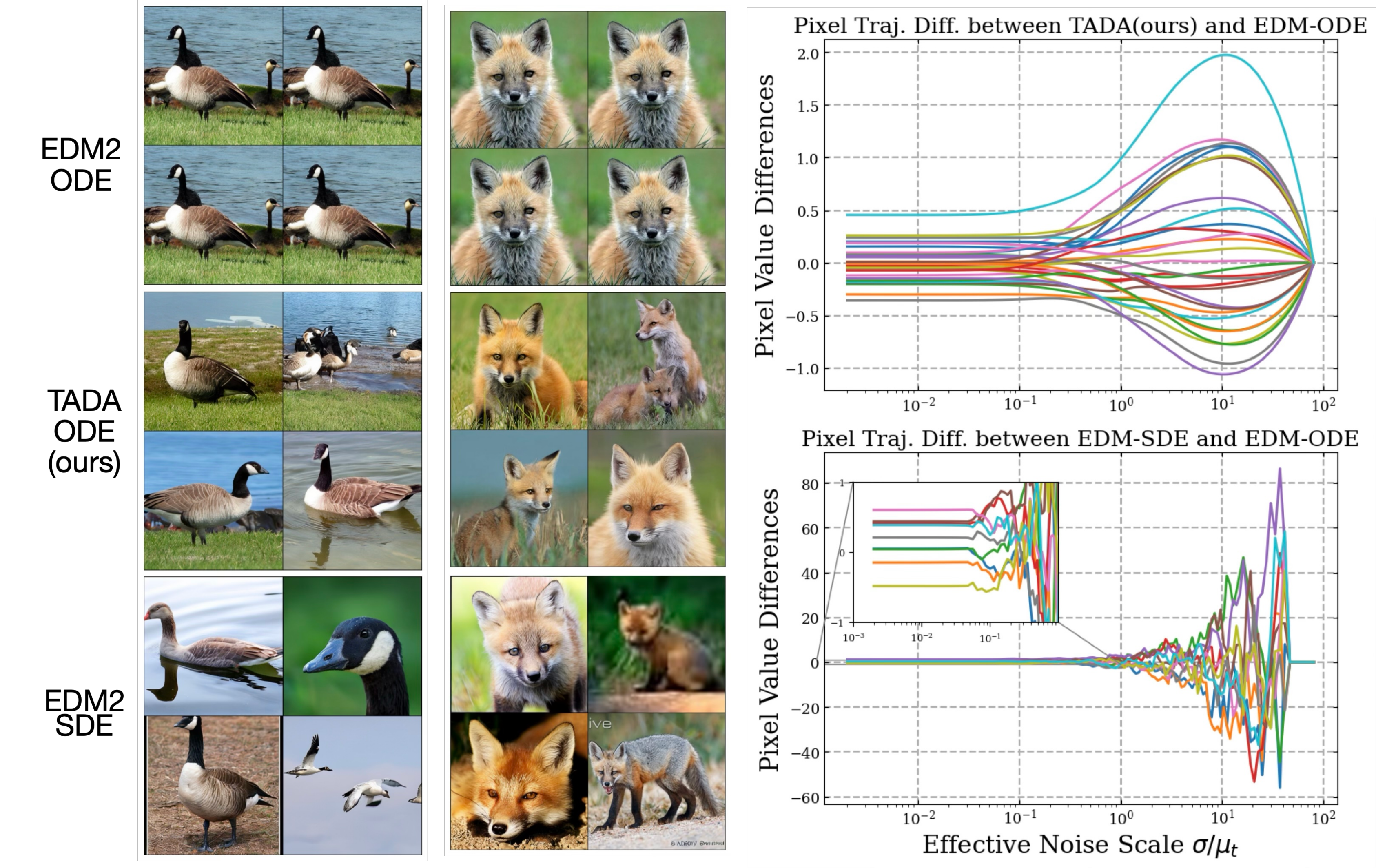}
    \caption{Demonstration of differences of \method and probablistic ODE in terms of trajectories. We apply the same pretrained model with momentum system with same discretization in terms of SNR and the same initial prior samples. \method can generate SDE-like property, such as generate different samples from same initial condition, but the system keeps deterministic ODE which can be solved more efficiently. See Prop.\ref{prop:y-dyn} for more detail.}
    \label{fig:tease-plot}
\end{figure}
\subsection{Phase Space Momentum Diffusion Model}\label{sec:mdm}
Several recent studies have explored a more sophisticated diffusion process defined in the phase space \citep{dockhorn2021score,chen2023deep}.
In this space there are $N=2$ noise variables, consequently $\bold{x}_t,\beps \in(\mathbb{R}^d)^2$.
Owing to the linear dynamics within this framework—similar to those in conventional diffusion models—the transition probability can also be derived analytically:
\begin{align*}
\rvx_t  = (\bmut\otimes \bold{I}_d) x_1 + (\mL_t \otimes \bold{I}_d) \beps, \quad \mL_t \mL_t^{\T} = \bSigmat, \quad \beps := \begin{bmatrix}
\epsilon^{(0)},
\epsilon^{(1)}
\end{bmatrix}^\T \sim \mathcal{N}(0, \mI_{2d}),
\end{align*}

where $\bmut\in\mathbb{R}^2$ and $\bSigmat\in\mathbb{R}^{2\times 2}$ denote the mean and covariance matrix of the resulting multi-variable Gaussian distribution. $\mL_t\in\mathbb{R}^{2\times 2}$ is the noise scaling matrix. The superscript denotes the $i$-th variable in the multi-variable setting, while the boldface notation $\rvx_t$ represents the aggregation of variables, i.e., $\rvx_t = [x_t^{(0)}, x_t^{(1)}]^\T$ in this setting.

\textbf{Notation}: In the rest of this document, we use of the Kronecker product to specify that all the weights are identical along the data dimension $d$. 
For example, the expression $(\bold\mu_t\otimes \bold{I}_d)x_1$ denotes the stacking of the scaled versions of $x_1$ by $\bmut^{(n)}$, e.g. $(\bmut\otimes \bold{I}_d)x_1=[\bmut^{(0)}x_1, \bmut^{(1)}x_1]^\T$. 

The standard training objective for MDM minimizes the approximation error of $\epsilon^{(1)}$:
\begin{align}\label{eq:momentum-obj}
\min_{\theta}\mathcal{L}_{\text{MDM}}(\theta)\coloneqq\E_{x_1,\beps,t} \norm{\epsilon_{\theta}(\rvx_t,t)-\epsilon^{(1)}}_2^2
\quad\text{with }\epsilon_\theta:(\mathbb{R}^d)^2\times[0,1]\rightarrow\mathbb{R}^d
\end{align}

\textbf{Note}: It should be observed that such techniques require training since the function $\epsilon_\theta$ takes two data inputs and cannot reuse pre-trained conventional diffusion models where $\epsilon_\theta$ takes only one data input. 

Just like for conventional diffusion models, various parameterizations of the learned noise $\epsilon_{\theta}$ remain viable.
In particular, AGM\citep{chen2023generative} and CLD\citep{dockhorn2021score} adopt distinct formulations of $F_{\theta}$, defined as linear combinations of $\epsilon_{\theta}$ and $\rvx_t$, specifically constructed to guide $x_t^{(0)}$ from the prior distribution to the target data distribution.
Consequently, sampling reduces to solving a similar probabilistic flow:

\begin{align}\label{eq:nd-ODE}
\frac{\rd \rvx_t}{\rd t} =( \mA_t\otimes \bold{I}_d)\rvx_t + (\rvb_t \otimes \bold{I}_d) F_{\theta}(\rvx_t,t), \quad \rvx_0 \sim \mathcal{N}(0, \bSigma_0).
\end{align}

where $\mA_t\in\mathbb{R}^{2\times 2}$ and $\rvb_t\in\mathbb{R}^2$ denote time-dependent coefficients.
Specifically, in the MDM framework, the function $F_{\theta}: (\mathbb{R}^{d})^{2}\times [0,1]\rightarrow \mathbb{R}^{d}$ accepts an aggregated input of two variables of dimension $d$ and outputs a single vector of dimension $d$.
The resulting $d$-dimensional output is subsequently broadcasted across the $N=2$ variables through the coefficient $\rvb_t$, a key distinction from conventional DM. 
The explicit formulations for $\mA_t$ and $\rvb_t$ are detailed in Appendix.\ref{sec:explicit-Ab}.

\subsection{Sampling with Exponential Integrators}
Once a pretrained diffusion model $\epsilon_{\theta}$ is obtained, the dynamics specified by eq. \ref{eq:1st-ODE} or \ref{eq:nd-ODE} can be readily solved. A variety of advanced training-free fast sampling techniques exist, and many of them rely on exponential integrators \citep{hochbruck2010exponential} which presents as the form of eq.\ref{eq:ei}:
\begin{align}\label{eq:ei}
x_t = \Phi(t,s)x_s + \underbrace{\int_{s}^{t}\Phi(t,\tau)b_{\tau}F_{\theta}(x_{\tau},\tau)\rd\tau}_{\text{Approximator } \Psi(s,t,x_s,F_{\theta})}.
\end{align}
Where $\Phi(\cdot,\cdot)$ is the transition kernel induced by $A_t$. Since the first linear component of the ODE can be analytically integrated via the transition kernel, the primary challenge lies in accurately approximating the second nonlinear integration, such as the neural network parameterized function $F_{\theta}$ over discretized time steps $s<t$.

To improve the accuracy of approximating the nonlinear integral in the second term, advanced ODE solvers are employed as approximators $\Psi(\cdot,\cdot,\cdot,\cdot)$ in eq.\ref{eq:ei}.
Higher-order single-step methods such as Heun’s method \citep{heun1900neue} and multi-step explicit methods like Adams–Bashforth \citep{butcher2008numerical} have been widely adopted in prior works \citep{karras2022elucidating,lu2022dpmpp,zhang2022fast,lu2022dpm}, often in conjunction with exponential integrators.
To further enhance accuracy, implicit schemes such as the Adams–Moulton method \citep{bank1987springer} have also been introduced \citep{zhao2023unipc}.
The current state-of-the-art solvers integrate these various techniques, and the specific combinations along with the resulting algorithms are summarized in Appendix.\ref{app:fast-solver}.

This work, however, is specifically focused on designing the ODE itself.
Therefore, our approach is entirely orthogonal to existing solver methodologies, allowing it to be seamlessly integrated with any solver type, see fig.\ref{fig:prop-demo} for explanation and demonstration.
\section{Method}

\begin{algorithm}[t]
  \caption{TADA sampling}\label{alg:B}
  \begin{algorithmic}[1]
    \Require discretized times $t_i \in [t_0,t_1,...t_T]$;
            ODE Solver as approximator $\Psi(\cdot,\cdot,\cdot,\cdot)$ (see eq.~\ref{eq:ei});
            Pretrained DM $x_{\theta}(\cdot,\cdot)$. Transition function: $\Phi(t,s)=\exp{\int_{s}^{t}}\rmA_{\tau}\rd \tau$. Cache $Q$.
    \State $\rvx_{t_0}\sim\mathcal{N}(0,\bSigma_{t_0})$
           \Comment{draw prior sample}

    \For{$i = 0$ to  $T-1$}
      \State compute $\bmu_{t_i},\bSigma_{t_i},\text{and }\rvr_{t_i}:=\frac{\bSigma_{t_i}^{-1}\bmu_{t_i}}{\bmu_{t_i}^\T \bSigma_{t_i}^{-1}\bmu_{t_i}}$
             \Comment{obtain the reweighting for $N$ variables}
      \State $\hat x \gets x_\theta\!\left((\rvr_{t_i}^{\!\top}\!\otimes I_d)\rvx_{t_i},t_i\right)$
             \Comment{data prediction with pretrained DM}
      \State $F_\theta \gets N!\dfrac{\hat x - \sum_{n=0}^{N-1}\frac{x_{t_i}^{(n)}}{n!}(1-t_i)^n}{(1-t_i)^N}$
             \Comment{Compute force term, see Sec.~\ref{sec:N-var-sampling}}
      \State $\Psi_{t_i}\approx\displaystyle\int_{t_i}^{t_{i+1}}\!\!\Phi(t_{i+1},\tau)\,\rvb_\tau
              F(\rvx_\tau,\tau)\,d\tau$
             \Comment{Approx. nonliear part using existing solver (with $Q$ if solver is multistep)}
      \State \textbf{if} Solver is multistep \textbf{then} $Q\xleftarrow{\text{cache}}\hat{x}$
             \Comment{store history}
      \State $\rvx_{t_{i+1}}\gets\Phi(t_{i+1},t_i)\rvx_{t_i}+\Psi_{t_i}$
             \Comment{state update}
    \EndFor
    
    \State \Return $x_\theta\!\bigl((\rvr_{t_T}^{\!\top}\!\otimes I_d)\,\rvx_{t_T},\,t_T\bigr)$
           \Comment{return results with data prediction}
  \end{algorithmic}
\end{algorithm}

We start in Sec.~\ref{sec:proof-equivalence} with a proof for the training equivalence between momentum diffusion models and conventional diffusion models for any $N\geq 1$.
This condition is necessary both for the training-free property of our method as well as for the generalization to arbitrarily large $N$.
We then introduce the proposed method itself in Sec.~\ref{sec:N-var-sampling}.
Finally in Sec.~\ref{sec:analysis-dynamics}, we study the dynamics of the proposed method and how they tie SDE and ODE formulations.



\subsection{Training equivalence between Momentum Diffusion Models and Diffusion Models}\label{sec:proof-equivalence}
In Momentum Diffusion Model case, typically, neural networks are parameterized with $N\in\{2,3\}$ augmented variables as input.
This method naively introduce certain problems.
While state-of-the-art diffusion models have developed advanced signal-to-noise ratio (SNR) schedules achieving excellent results ~\citep{karras2022elucidating,hoogeboom2023simple,kingma2023understanding}, defining an appropriate SNR within momentum systems is challenging.
Each variable in a momentum system inherently has its own SNR and is coupled via a covariance matrix, complicating the definition and practical usage of SNR.
Consequently, this complexity has hindered progress in momentum-based diffusion modeling.
Here, we show that the SNR of the momentum diffusion model can be characterized as the optimal SNR achievable through a linear combination of multiple variables, as stated in the following proposition.
\begin{proposition}\label{prop:mdm-training}
The training objective of general Momentum Diffusion Models (MDM) (i.e., eq.~\ref{eq:momentum-obj}) can be equivalently reparameterized as:
\begin{align*}
&\gL_{\text{MDM}}(\theta)\propto\E_{\rvx_t}||x_{\theta}(\rvx_t,t)-x_1||_2^2  \ \Rightarrow \quad x_{\theta}^{*}(\rvx_t,t)=\E[x_1|\rvx_t] \\
&\E[x_1|\rvx_t]=\E\left[x_1 |(\rvr_t^\T\otimes\bold{I}_d)\rvx_t,\boldsymbol{\epsilon}\right]=\E\left[x_1|(\rvr_t^\T\otimes\bold{I}_d)\rvx_t\right]
\quad\text{where} \ \rvr_t:= \frac{\bold{\Sigma}_t^{-1} \bmu_t}{\bmu_t^\T \bold{\Sigma}_t^{-1}\bmut}.
\end{align*}
\end{proposition}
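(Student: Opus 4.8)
The plan is to prove the statement through three reductions, doing the real work only in a sufficiency argument for the posterior of $x_1$ given $\rvx_t$. First I would reduce the noise-prediction objective of eq.\ref{eq:momentum-obj} to a data-prediction objective. The structural fact I would exploit is that $\mL_t$ is invertible, since $\bSigmat=\mL_t\mL_t^\T$ is positive definite; hence from $\rvx_t=(\bmut\otimes\mathbf{I}_d)x_1+(\mL_t\otimes\mathbf{I}_d)\beps$ one solves $\beps=(\mL_t^{-1}\otimes\mathbf{I}_d)(\rvx_t-(\bmut\otimes\mathbf{I}_d)x_1)$. Reading off the block corresponding to $\epsilon^{(1)}$ shows that, conditioned on $\rvx_t$, the regression target $\epsilon^{(1)}$ is an affine function of $x_1$: $\epsilon^{(1)}=(\mathbf{a}_t^\T\otimes\mathbf{I}_d)\rvx_t-\gamma_t x_1$ for a fixed row vector $\mathbf{a}_t^\T=\mathbf{e}_1^\T\mL_t^{-1}$ and scalar $\gamma_t=\mathbf{e}_1^\T\mL_t^{-1}\bmut\neq0$ (nonzero by non-degeneracy of the objective). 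Substituting the matching reparameterization $\epsilon_\theta=(\mathbf{a}_t^\T\otimes\mathbf{I}_d)\rvx_t-\gamma_t x_\theta$ turns the integrand into $\gamma_t^2\|x_\theta-x_1\|^2$; since $\gamma_t^2>0$ is a $\theta$-independent per-$t$ weight, the two objectives share a minimizer, giving $\gL_{\text{MDM}}(\theta)\propto\E_{\rvx_t}\|x_\theta(\rvx_t,t)-x_1\|^2$. The standard $L^2$ fact that the minimizer of $\E\|f(\rvx_t)-x_1\|^2$ over measurable $f$ is the conditional mean then yields $x_\theta^{*}(\rvx_t,t)=\E[x_1\mid\rvx_t]$.

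The crux is the last equality, namely that $y:=(\rvr_t^\T\otimes\mathbf{I}_d)\rvx_t$ is a sufficient statistic for $x_1$. I would argue by Bayes. Since $\rvx_t\mid x_1\sim\mathcal{N}((\bmut\otimes\mathbf{I}_d)x_1,\ \bSigmat\otimes\mathbf{I}_d)$, the likelihood's dependence on $x_1$ lives entirely in its quadratic form; using $(\bSigmat\otimes\mathbf{I}_d)^{-1}=\bSigmat^{-1}\otimes\mathbf{I}_d$ and the scalar identity $(\bmut^\T\otimes\mathbf{I}_d)(\bSigmat^{-1}\otimes\mathbf{I}_d)(\bmut\otimes\mathbf{I}_d)=(\bmut^\T\bSigmat^{-1}\bmut)\mathbf{I}_d$, completing the square in $x_1$ collapses the likelihood, up to an $x_1$-free factor, to $\exp(-\tfrac{c_t}{2}\|x_1-y\|^2)$ with $c_t:=\bmut^\T\bSigmat^{-1}\bmut$ — and the cross term lands exactly on $y=(\rvr_t^\T\otimes\mathbf{I}_d)\rvx_t$ precisely because $\rvr_t=\bSigmat^{-1}\bmut/c_t$. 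Thus $p(x_1\mid\rvx_t)\propto\exp(-\tfrac{c_t}{2}\|x_1-y\|^2)\,\pdata(x_1)$ depends on $\rvx_t$ only through $y$, so $\E[x_1\mid\rvx_t]$ is a function of $y$ alone and equals $\E[x_1\mid y]$.

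To account for the middle expression $\E[x_1\mid(\rvr_t^\T\otimes\mathbf{I}_d)\rvx_t,\beps]$, I would exhibit the orthogonal decomposition of $\rvx_t$ behind it. Because $\rvr_t^\T\bmut=1$, one gets the clean observation $y=x_1+(\rvr_t^\T\mL_t\otimes\mathbf{I}_d)\beps$ whose noise has per-coordinate variance $\rvr_t^\T\bSigmat\rvr_t=1/c_t$; this is the optimal-SNR linear combination advertised in the text. Any $\mathbf{w}$ with $\mathbf{w}^\T\bmut=0$ yields a complementary component $(\mathbf{w}^\T\otimes\mathbf{I}_d)\rvx_t=(\mathbf{w}^\T\mL_t\otimes\mathbf{I}_d)\beps$ that is a pure function of the noise — the ``$\beps$'' in the middle term — and is uncorrelated with $y-x_1$ since $\rvr_t^\T\bSigmat\mathbf{w}=\mathbf{w}^\T\bmut/c_t=0$; jointly Gaussian plus uncorrelated gives independence. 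Conditioning on $\rvx_t$ is therefore the same as conditioning on $y$ together with these pure-noise components, and the latter carry no information about $x_1$ given $y$, so they drop out and the chain of equalities follows.

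I expect the only genuinely delicate point to be that $x_1\sim\pdata$ is \emph{not} Gaussian, which forbids any closed-form Gaussian conditioning formula for $\E[x_1\mid\rvx_t]$. The factorization-of-the-posterior argument is exactly what sidesteps this: it never uses Gaussianity of $\pdata$, only that the Gaussian likelihood factorizes through $y$. Everything else is bookkeeping — checking positivity and $\theta$-independence of the weight $\gamma_t^2$ so the reparameterization preserves minimizers, and the Kronecker identities that let scalars such as $\bmut^\T\bSigmat^{-1}\bmut$ be pulled outside the $\otimes\mathbf{I}_d$ blocks — so the sufficiency/independence step is the conceptual core while the two surrounding reductions are routine.
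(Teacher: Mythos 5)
Your proof is correct, and it is worth separating its two halves. The loss reparameterization via the whitening vector $\rva_t^{\T}=\rve^{\T}\mL_t^{-1}$ and the identity $\epsilon^{(1)}=(\rva_t^{\T}\otimes\mI_d)\rvx_t-\gamma_t x_1$ is exactly the paper's own general-$N$ derivation (Appendix \ref{app:N-var-MDM-loss}); like the paper you assert $\gamma_t\neq 0$ rather than prove it, and you are in fact more careful than the paper in flagging that $\gamma_t^2$ is a per-$t$ weight, so ``$\propto$'' should be read as ``same pointwise minimizer.'' Where you genuinely diverge is the sufficiency step. The paper argues that $\rvx_t\mapsto(y_t,\beps)$ with residual $\beps:=\rvx_t-y_t\bmut$ is an invertible linear map, so $\sigma(\rvx_t)=\sigma(y_t,\beps)$, and then drops $\beps$ from the conditioning by asserting it is ``independent Gaussian'' — an independence claim that is used but never verified. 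You instead run a Bayes/factorization argument: completing the square in $x_1$ collapses the likelihood to $\exp\bigl(-\tfrac{c_t}{2}\norm{x_1-y}^2\bigr)$ times an $x_1$-free factor, with $c_t:=\bmut^{\T}\bSigmat^{-1}\bmut$, so the posterior, and hence $\E[x_1\mid\rvx_t]$, depends on $\rvx_t$ only through $y$. This buys two things: the first-equals-third equality needs no independence claim at all, and it makes explicit why non-Gaussianity of $\pdata$ is harmless (only the likelihood, not the prior, must factor through $y$). Your handling of the middle expression is then the paper's residual argument done properly: the rows of $\mI-\bmut\rvr_t^{\T}$ are precisely vectors $\rvw$ with $\rvw^{\T}\bmut=0$, and your computation $\rvr_t^{\T}\bSigmat\rvw=\rvw^{\T}\bmut/c_t=0$ supplies exactly the independence the paper leaves unproven; you also correctly read the proposition's $\boldsymbol{\epsilon}$ as this residual rather than the forward noise (with the forward noise the middle equality would be false, since $(y,\boldsymbol{\epsilon})$ would determine $x_1$). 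The only point to tighten is your phrase ``conditioning on $\rvx_t$ is the same as conditioning on $y$ together with these pure-noise components'': to get equality of $\sigma$-algebras you should fix $N-1$ linearly independent such $\rvw$'s spanning the orthogonal complement of $\bmut$, so that together with $\rvr_t$ (which satisfies $\rvr_t^{\T}\bmut=1$) they form a basis and the combined linear map is invertible — the identification the paper makes explicit via its residual.
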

Moreover, the $F_{\theta}$ in eq.~\ref{eq:nd-ODE} can be recovered as a linear combination of $x_{\theta}$ and $\rvx_t$ (see section.\ref{sec:N-var-sampling} for details).
Here, $\bmu_t^\T \bold{\Sigma}_t^{-1}\bmut$ is the effective SNR of $(\rvr_t^\T\otimes\bold{I}_d)\rvx_t$ 
which simply is a weighted linear combination of \( \rvx_t \) by $\rvr_t$.
\begin{proof}
  See Appendix.~\ref{app:prop:mdm-training}.
\end{proof}
This proposition holds significant implications despite its apparent conceptual simplicity:
It demonstrates that MDM, even when involving multiple input variables to the neural network, can be trained using a single input constructed as a \( \rvr_t \)-weighted linear combination of the $N$ input variables. 
As a consequence, the training objective becomes equivalent to that of a conventional diffusion model, addressing debates about potential advantages in training arising from momentum diffusion or trivial variable augmentations.
Importantly, this also allows direct reuse of pretrained conventional diffusion models within the MDM framework and therefore the training-free property claimed by our method.

\subsection{Momentum Diffusion Sampling Methodology}\label{sec:N-var-sampling}
~\cite{chen2023generative} highlighted that MDM can yield promising results with small numbers of function evaluations (NFE); however, the absolute performance at sufficient NFE lags behind traditional diffusion models, revealing fundamental training issues.
Due to the observation in Prop.~\ref{prop:mdm-training}, such training issues can be trivially resolved, or, even more easily, we can simply plug in the pretrained diffusion model.

\textbf{\methodfull}

We now present the sampling procedure after plugging the pretrained diffusion model into the system.
Now the input of neural network becomes $(\rvr_t^\T\otimes\bold{I}_d)\rvx_t\in\mathbb{R}^d$ instead of $\rvx_t\in(\mathbb{R}^d)^N$ as is the case in vanilla MDM with $N=2$.
Since $\bmu_t^\T \bold{\Sigma}_t^{-1}\bmut$ is the effective SNR in the momentum system, one can simply map it to the time conditioning used in the pretrained model. 

We extend the AGM ~\citep{chen2023generative} framework to an arbitrary $N$-variables augmented space.
Similarly to Sec.~\ref{sec:mdm}, we reparameterize the data estimation $x_{\theta}$ to the force term $F_{\theta}$, which drives the dynamics of $\xt{0}$ toward $x^{(0)}_1 \sim p_{\text{data}}$ explicitly.
The matrices involved have the closed form:
\begin{align*}
    \rmA_t =
    \begin{bmatrix}
    0 & 1 & 0 & \cdots & 0 \\
    0 & 0 & 1 & \cdots & 0 \\
    \vdots & \vdots & \ddots & \ddots & \vdots \\
    0 & 0 & \cdots & 0 & 1 \\
    0 & 0 & \cdots & 0 & 0
    \end{bmatrix}_{N\times N},  \rvb_t:=\begin{bmatrix}
      0 \\
      0 \\
      \vdots \\
      1
      \end{bmatrix};
    F_{\theta}(\rvx_t,t):=N!\frac{x_{\theta}(\rvr^\T \rvx_t,t)-\sum_{n=0}^{N-1}\frac{\xt{n}}{n!}(1-t)^n}{(1-t)^{N}}.
\end{align*}

\begin{remark}\label{remark:FM-AGM-equiv}
    When $N=1$, this formulation simplifies precisely to vanilla flow matching.
    When $N=2$, it is essentially same as the deterministic case of AGM~\citep{chen2023generative} but with the added training-free property that our method carries.
    Please see Appendix.~\ref{app:ext-fm} for details.
\end{remark}

\begin{figure}[t]
\centering
\includegraphics[width=\linewidth]{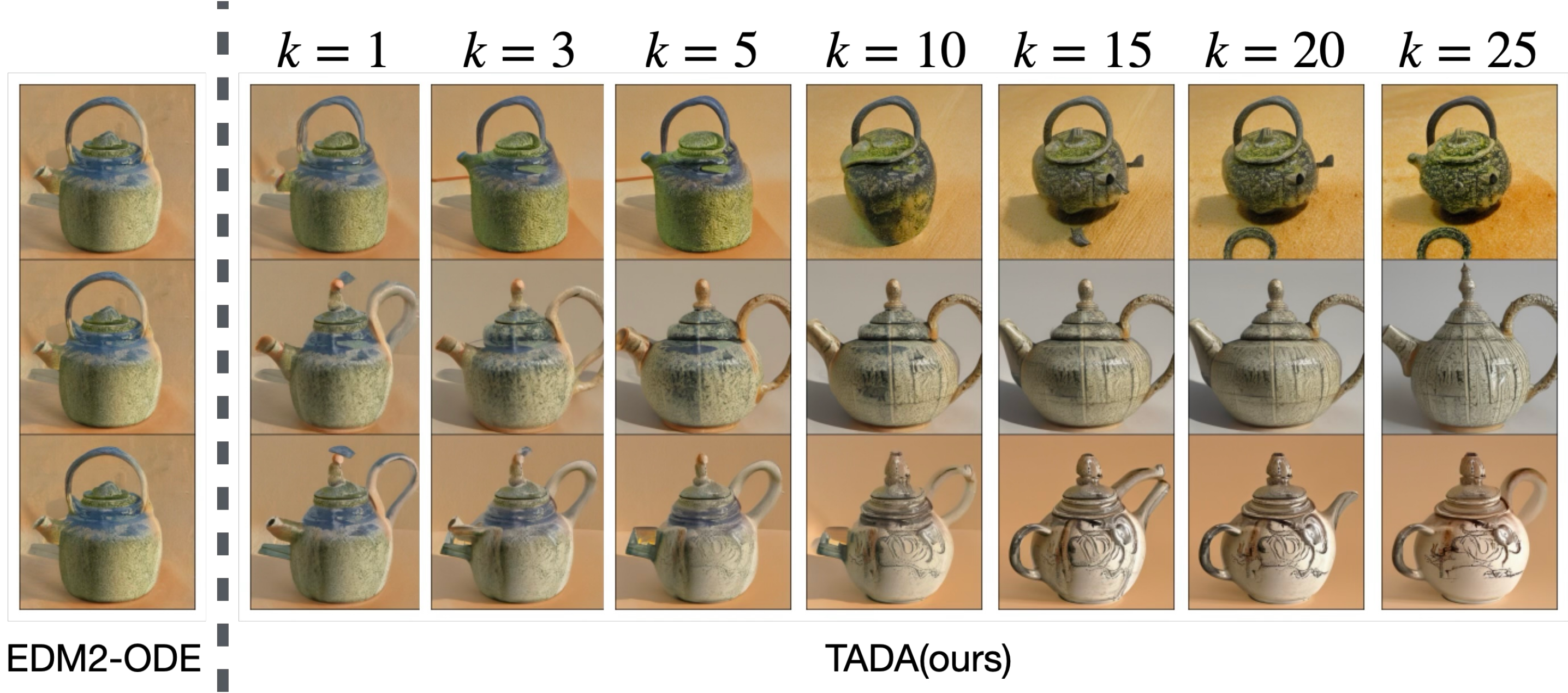}
\caption{Generated samples under varying prior scales are shown, all initialized with the same initial condition of dynamics in Prop.~\ref{prop:y-dyn}:\( y_0:=(\rvr_0^\T \otimes \mI_d)\rvx_0 \equiv \epsilon \), using an identical time discretization over the SNR and the same pretrained model with 15 NFEs. It can be observed that the diversity of the generated results increases proportionally with the standard deviation of the final variable \( x_0^{(N-1)} \), which is scaled by a factor \( k \): $\bSigma_0=\text{diag}(1,1,...,k)$.}\label{fig:varying_k}
\vspace{-0.6cm}
\end{figure}

To compute the reweighting term \(\rvr_t\), the mean and covariance matrix at time $t$ are required. These quantities can be obtained by analytically solving the coupled dynamics of the mean $\bmut$ and covariance $\bSigmat$, a standard approach ~\cite{sarkka2019applied} in both conventional DM and MDM. 
Due to space constraints, the explicit analytical expressions are presented in Appendix~\ref{app:explicit-mu-sigma}.

\subsection{Analysis of Sampling Dynamics}\label{sec:analysis-dynamics}
In conventional diffusion models, variations in generative dynamics largely stem from differences in time discretization schemes, as discussed in ~\cite{karras2022elucidating}; for example, Variance Preserving, Variance Exploding, and Flow Matching dynamics primarily differ in their time discretization over SNR during sampling. This naturally raises the question of whether momentum diffusion simply constitutes another form of time discretization. To address this, we conduct a detailed analysis of variable $y_t$ which is fed into the neural network in Prop.~\ref{prop:y-dyn}.

\begin{proposition}\label{prop:y-dyn}
The dynamics of neural network input $y_t\coloneqq(\rvr_t^\T\otimes \mI_d) \rvx_t$ is given by:
\begin{align*}
\frac{\rd y_t}{\rd t} =\underbrace{\sum_{i=0}^{N-1} w^{(i)}_t\epsilon^{(i)}(\xt{i})}_{\text{Pseudo Noise}}+\alpha_t y_t+\beta_t x_{\theta}(y_t,t).
\end{align*}
For the coefficient of $w_t^{i}$, $\alpha_t$ and $\beta_t$, please refer to Appendix.~\ref{app:prop:y-dyn}. $\epsilon_t^{(i)}$ denotes the estimated Gaussian noise for each variable $x_t^{i}$, induced by $x_{\theta}$. This is analogous to the standard diffusion model but extended to the multi-variable case. Please see Appendix.\ref{app:prop:y-dyn} for more detail.
\end{proposition}
\begin{proof}
  See Appendix.~\ref{app:prop:y-dyn}.
\end{proof}
\begin{remark}
    There are two scenarios in which Prop~\ref{prop:y-dyn} degenerates into a mere different time discretization of the conventional diffusion model, irrespective of the value of $N$. The first occurs when $N = 1$, and the second arises when \( \rmA_t \) is a diagonal matrix, implying that each variable evolves independently. Further details and proof are provided in Appendix.~\ref{app:degenerate-MSD}.
\end{remark}

As demonstrated in Prop~\ref{prop:y-dyn}, the dynamics of the neural network input $y_t$ cannot be expressed solely as a function of $y_t$; rather, an additional \emph{pseudo noise} term emerges due to interactions among variables, endowing the system with SDE properties, even when solving the deterministic ODE in Eq.~\ref{eq:nd-ODE}. Notably, we empirically find that, the diversity of samples generated from the same prior can be explicitly manipulated by scaling the standard deviation of the final variable $x_0^{(N-1)}$ by a factor of $k$, corresponding to the last diagonal entry of the covariance matrix \( \bSigma_0 \), as illustrated in Fig.~\ref{fig:varying_k}.



\section{Experiments}\label{sec:experiment}
\begin{figure}[t]
\centering
\includegraphics[width=\linewidth]{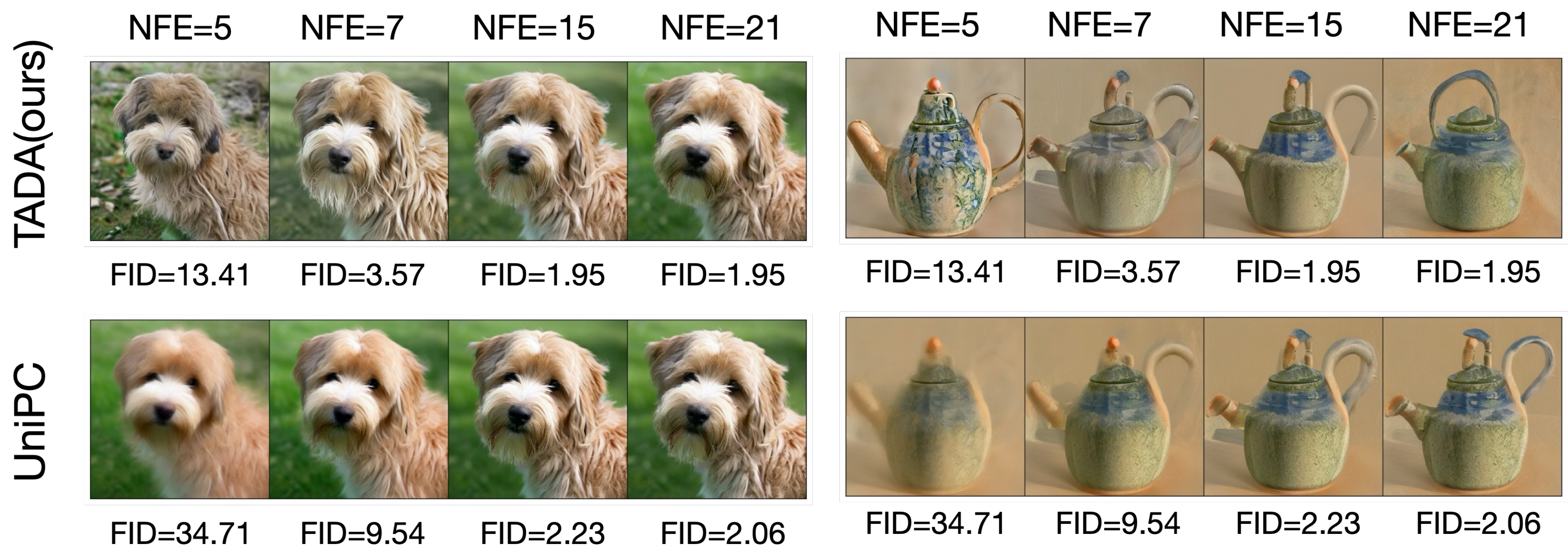}
\caption{Qualitative comparison with UniPC, varying the NFEs, using the same initial condition and the same pretrained EDM2 model. More qualitative comparision can be found in Appendix.~\ref{app:additional-comparision}.}
\label{fig:edm2-varying-NFE}
\end{figure}
In this section, we evaluate the performance \method in comparison with a range of ODE and SDE solvers based on conventional first-order diffusion probabilistic flows. Specifically, we assess diffusion models such as EDM~\citep{karras2022elucidating}, EDM2~\citep{karras2024analyzing}, in both pixel and latent spaces on the ImageNet-64 and ImageNet-512 datasets~\citep{deng2009imagenet}. Additionally, we evaluate the flow matching model Stable Diffusion 3~\citep{esser2024scaling} in the latent space.

\begin{figure}[t]
  \centering
  \includegraphics[width=\linewidth]{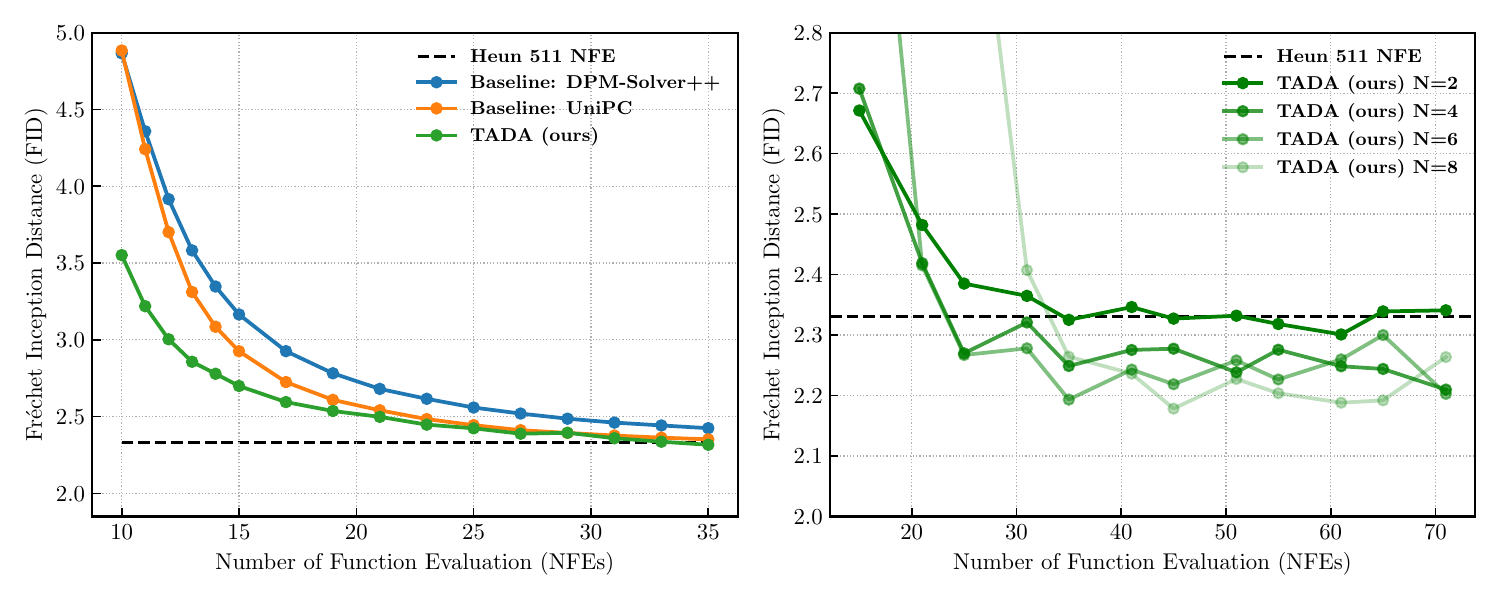}
  \caption{\emph{left}: Comparison with baselines on ImageNet-64 using EDM pretrained model. \emph{Right}: Performance under varying numbers of variables \(N\) while keeping the SNR-based time discretization same to the \(N = 2\) setting;\(N = 2\) is the default configuration reported throughout this paper.}
  \label{fig:imagenet64}
  \vspace{-0.4cm}
\end{figure}

\begin{figure}[t]
  \centering
  \includegraphics[width=\linewidth]{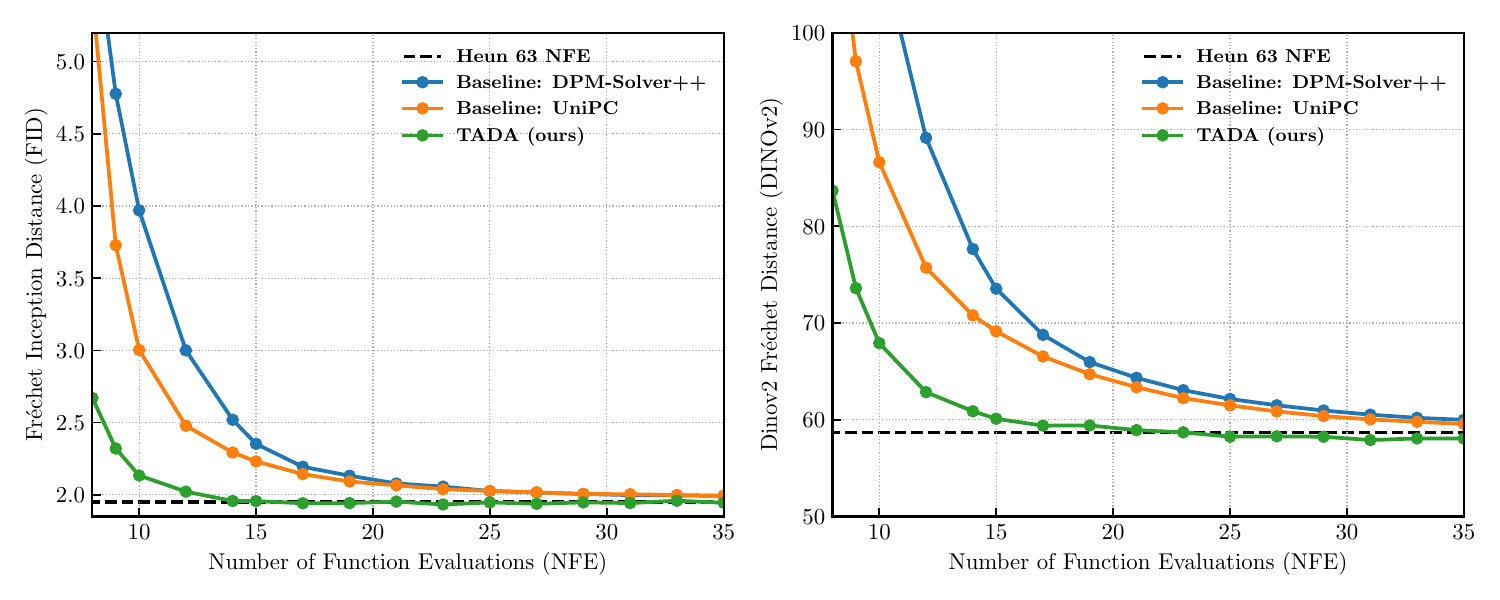}
  \caption{Comparison with baselines on ImageNet-512 with EDM2 pretrained model.}
  \label{fig:imagenet512}
  \vspace{-0.2cm}
\end{figure}

\begin{figure}[t]
  \centering
  \includegraphics[width=\linewidth]{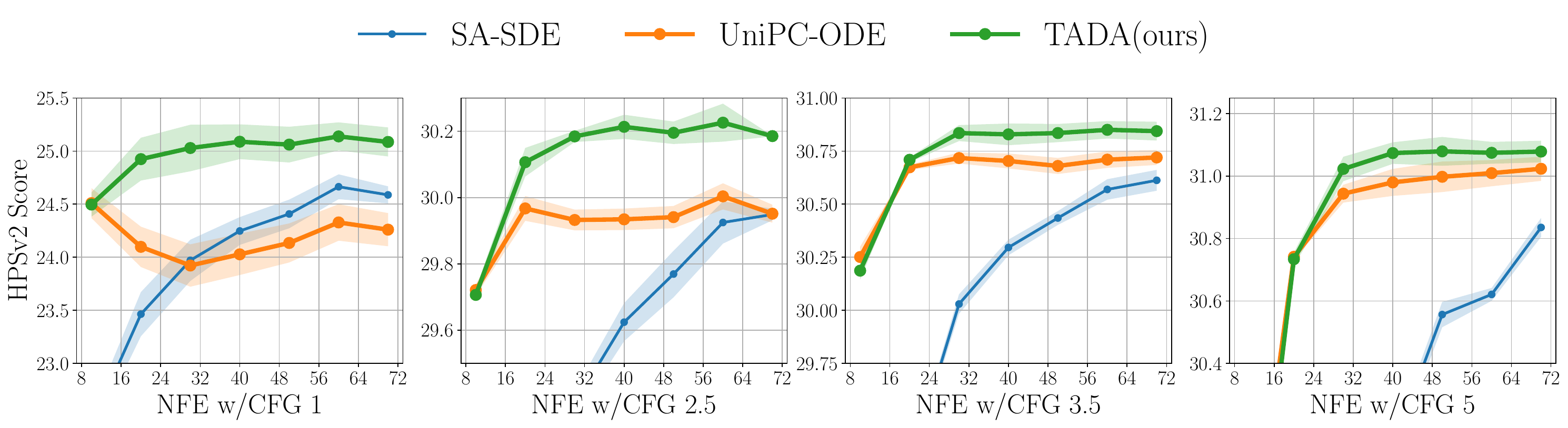}
  \caption{Comparison with baselines with HpsV2 metrics with SD3 pretrained model.}
  \label{fig:sd3-hpsv2}
\end{figure}
\begin{figure}[t]
  \centering
  \includegraphics[width=\linewidth]{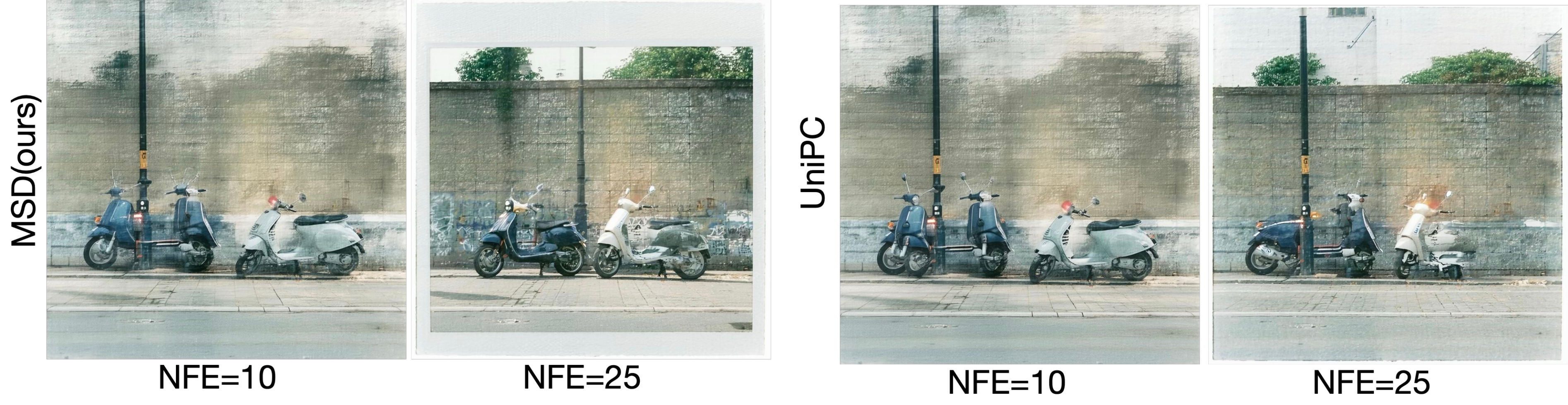}
  \caption{Qualitative Comparison with baselines with SD3 model w/o CFG. UniPC delivers no additional gains in image quality, whereas our method continues to improve. This trend is evident both in Fig.~\ref{fig:sd3-hpsv2} and in the samples generated at NFE = 25.}
  \label{fig:sd3-examples}
  \vspace{-0.5cm}
\end{figure}

\subsection{EDM and EDM2 Experiment}\label{exp:edms}
We begin by benchmarking our method against UniPC~\citep{zhao2023unipc} and DPM-Solver++~\citep{lu2022dpmpp} on both ImageNet-64 and ImageNet-512, using the EDM~\citep{karras2022elucidating} and EDM2~\citep{karras2024analyzing} frameworks. To ensure fair and competitive evaluation, we conduct comprehensive ablations over all combinations of discretization schemes, solver variants, and solver orders available in the respective codebases, selecting the best-performing configurations for comparison. Full ablation results for the baselines are reported in the Supplementary Material.
For our method, \method, we consistently employ the simplest multi-step exponential integrator with third-order solvers, using the same polynomial discretization across all experiments. 

We evaluate performance using Fréchet Inception Distance score(FID~\cite{heusel2017gans}) for both ImageNet-64 and ImageNet-512, and additionally use Fréchet Distance-DINOv2 (FD-DINOv2~\cite{stein2023exposing}) for ImageNet-512.
Our results show that \method consistently outperforms the baselines across all tested numbers of function evaluations (NFEs) in fig.~\ref{fig:imagenet64} and fig.~\ref{fig:imagenet512}. We also examine the case with $N>2$, which introduces additional pseudo noise as described in Prop.~\ref{prop:y-dyn}. In all setups, we use the same discretization over SNR and vary only the standard deviation of $x_0^{(N-1)}$. This configuration results in improved performance on ImageNet-64, as shown in fig~\ref{fig:imagenet64}. However, no consistent improvement is observed on ImageNet-512, which may be attributed to the high capacity of the neural network and limited exploration of time discretization, thereby diminishing the impact of the additional noise perturbation. For consistency, we therefore report results only for the $N = 2$ in fig.~\ref{fig:imagenet512} and the rest of experiments.

Fig~\ref{fig:edm2-varying-NFE} presents qualitative results under varying NFE budgets. Unlike conventional ODE solvers, which tend to produce increasingly blurred outputs as NFEs decrease, \method continues to generate plausible images with preserved details. Nevertheless, these images may still deviate from the data distribution $\pdata$, as indicated by the larger FID metric. See Appendix.~\ref{app:additional-comparision} for more examples.

\subsection{SD3 Experiment}
For evaluations on Stable Diffusion 3~\citep{esser2024scaling}, we exclude DPM-Solver++ from comparison, as UniPC has consistently outperformed it in Section~\ref{exp:edms}. Additionally, we observe that the default Flow Matching solver in SD3 achieves performance comparable to UniPC. To enrich our baseline set, we include a recently proposed SDE-based solver, Stochastic Adam (SA)~\citep{xue2023sa}. All baseline configurations follow the recommended settings from the official Diffusers implementation. For evaluation, we adopt HPSv2~\citep{wu2023human} as our primary benchmark. Specifically, we generate images for all benchmark prompts, producing 3200 samples per seed, and report the mean and standard deviation over three random seeds, thus, totally 9600 images. To ensure consistency, our method uses the same SNR-based discretization scheme as SD3, which is also the default across all baseline implementations. For the sake of consistency, we employ a second-order multi-step solver, regardless of CFG scale.

Fig.~\ref{fig:sd3-hpsv2} presents the quantitative performance of \method compared to the baselines. Our proposed method consistently outperforms the baselines across all CFG strengths. However, the performance gap narrows as the CFG increases, suggesting that \method is particularly effective at lower CFG values, where it achieves more pronounced improvements over existing methods. Fig~\ref{fig:sd3-examples} demonstrates that \method produces images with more semantically coherent content at 25 NFEs than the baseline, an observation further supported by the quantitative metrics presented in Fig.~\ref{fig:sd3-hpsv2}.
\vspace{-0.2cm}
\section{Conclusions and Limitations}
\vspace{-0.2cm}
In this paper, we have identified and elucidated the underlying training principle of the momentum diffusion model, demonstrating that it fundamentally aligns with that of conventional diffusion models. This observation enables the direct integration of pretrained diffusion models into momentum-based systems without additional training. Furthermore, we conducted a thorough analysis of the sampling dynamics associated with this approach and discovered that the implicit system dynamics introduce additional pseudo-noise. Empirical evaluations confirmed that this characteristic indeed enhances the sampling quality across various datasets and pretrained models.

However, our approach also has limitations. On the theoretical side, Prop~\ref{prop:y-dyn} does not disentangle the degrees of freedom associated with the pseudo noise dimension $N$ and the weighting coefficients $w_t^{(i)}$. As a result, while Fig.\ref{fig:imagenet64} demonstrates that increasing system stochasticity can lead to improved outcomes, our method still falls short of matching the SDE results reported in \cite{karras2022elucidating} with 511 NFEs. This shortfall stems from the limited control over the injected stochasticity due to the naive choice of \( \rmA_t \). On the practical side, experiments with SD3 show that as model capacity and CFG strength increase, the performance differences among solvers, dynamical formulations, and between SDE and ODE sampling become increasingly negligible,a trend that also holds for TADA. Conversely, the results underscore TADA's strength in under-parameterized settings, where modest-capacity DM must tackle large scale, high-dimensional data such as those encountered in video generation. Lastly, \method currently uses a basic exponential integrator. Evaluating further improvement with more advanced solvers, such as UniPC and DPM-Solver++, remains an critical direction for future work.

\textbf{Broader Impacts:} \method boosts DM efficiency, accelerating the spread of generated content.
\newpage
\bibliography{reference}
\bibliographystyle{plainnat}
\newpage
\newpage
\appendix
\section*{\LARGE Appendix}
\section{Notations}\label{sec:Notation}
In this appendix, we follow the notation introduced before. However, in order to reduce the complexity of the derivation, we simplify all the coefficient with Kronecker product. All the variable which sololy depends on the time, will be broadcast by the kroneker product, For example $\rmA_t:= \rmA_t \otimes \rmI_d$, $\bmut:=\bmut\otimes \rmI_d$, $\rmL_t^vv=\rmL_t^vv \otimes \rmI_d$ etc. 
\section{Proof of Proposition.~\ref{prop:mdm-training}}\label{app:prop:mdm-training}
\newcommand{\EE}{\mathbb E}
\newcommand{\indep}{\perp\!\!\!\perp}


\begin{proof}
We first analyze the objective function of momentum diffusion model for $N=2$ case, and it can be generalize to larger $N$. \textbf{For clarity in dimensional alignment and derivational correctness, we refer the reader to the simplified notation in Section~\ref{sec:Notation}, which will also be adopted throughout this section.}
\begin{align}
\min_{\theta}\mathcal{L}_{\text{MDM}}(\theta)&=\E_{x_1,\beps,t} \norm{\epsilon_{\theta}(\rvx_t,t)-\epsilon^{(1)}}_2^2\\
&=\E_{x_1,\beps,t} \frac{1}{\Lvvt^2}\norm{\Lvvt\epsilon_{\theta}(\rvx_t,t)-\Lvvt\epsilon^{(1)}}_2^2\\
&=\E_{x_1,\beps,t} \frac{1}{\Lvvt^2}\norm{\Lvvt\epsilon_{\theta}(\rvx_t,t)-\left[\xt{1}-\frac{\Lxvt}{\Lxxt}\xt{0}-\left(\bmut^{(0)}-\frac{\Lxvt}{\Lxxt}\bmut^{(1)}\right)x_1\right]}_2^2\\
&=\E_{x_1,\beps,t} \frac{1}{\Lvvt^2}\norm{\Lvvt\epsilon_{\theta}(\rvx_t,t)-\xt{1}+\frac{\Lxvt}{\Lxxt}\xt{0}-\left(\frac{\Lxvt}{\Lxxt}\bmut^{(1)}-\bmut^{(0)}\right)x_1}_2^2\\
&=\E_{x_1,\beps,t} \frac{\left(\frac{\Lxvt}{\Lxxt}\bmut^{(1)}-\bmut^{(0)}\right)^2}{\Lvvt^2}\norm{\underbrace{\frac{\Lvvt\epsilon_{\theta}(\rvx_t,t)-\xt{1}+\frac{\Lxvt}{\Lxxt}\xt{0}}{\frac{\Lxvt}{\Lxxt}\bmut^{(1)}-\bmut^{(0)}}}_{\text{parameterized Neural Netowrk}}-x_1}_2^2
\end{align}
Following the same spirit, one can derive the case for $N$ variable. See Appendix.~\ref{app:N-var-MDM-loss} for details.

We know that,
\[
   \mathbf x_t \;\bigl|\;x_1
   \sim \mathcal N\!\bigl(\boldsymbol\mu_t x_1,\;\boldsymbol\Sigma_t\bigr),
\]
Define  
\[
   \mathbf r_t := \frac{\boldsymbol\Sigma_t^{-1}\boldsymbol\mu_t}{\bmut^\T\boldsymbol\Sigma_t^{-1}\boldsymbol\mu_t},
   \quad
   y_t       := \mathbf r_t^{\!\top}\mathbf x_t,
\]
and the residual (``noise’’)  
\[
   \boldsymbol\epsilon
   := \mathbf x_t - {y_t}\,\boldsymbol\mu_t .
\]

Since the operation is linear, one can transform the $\rvx_t$ by
\[
   T:\;\mathbf x_t \;\mapsto\; (y_t,\boldsymbol\epsilon)
      =\bigl(\mathbf r_t^{\!\top}\mathbf x_t,\;
             \mathbf x_t-\tfrac{(\mathbf r_t^{\!\top}\mathbf x_t)}{a_t}\boldsymbol\mu_t\bigr)
\]
and it is \emph{invertible} with linear inverse  
\(
   \mathbf x_t = y_t\boldsymbol\mu_t+\boldsymbol\epsilon.
\)
Hence the $\sigma$-algebras coincide:
\[
   \sigma(\mathbf x_t)=\sigma\!\bigl(y_t,\boldsymbol\epsilon\bigr).
\]

For any integrable random variable $Z$,
equal $\sigma$-fields imply  
\(
   \EE[Z\mid\mathbf x_t]=\EE[Z\mid y_t,\boldsymbol\epsilon].
\)
Taking $Z=x_1$ yields
\[
   \EE[x_1\mid\mathbf x_t]
   = \EE[x_1\mid y_t,\boldsymbol\epsilon].
\]
due to the fact that $\beps$ is the independent gaussian, thus
\[
   \EE[x_1\mid\mathbf x_t]= \EE[x_1\mid y_t,\boldsymbol\epsilon]=\EE[x_1\mid y_t,\boldsymbol\epsilon]=\EE[x_1\mid y_t].
\]
\end{proof}

\section{Proof of Proposition.~\ref{prop:y-dyn}}\label{app:prop:y-dyn}
\textbf{For clarity in dimensional alignment and derivational correctness, we refer the reader to the simplified notation in Section~\ref{sec:Notation}, which will also be adopted throughout this section.}
\begin{proof}
The dynamics we considered reads
\begin{equation}
  \frac{ \rd \rvx_t}{dt} = \rmA_t\,\rvx_t + \rvb_t F_t, \quad x_0\sim \mathcal{N}(0,I).
  \end{equation}

and again, in this paper, we only consider,
\begin{align}
    \rmA_t =
    \begin{bmatrix}
    0 & 1 & 0 & \cdots & 0 \\
    0 & 0 & 1 & \cdots & 0 \\
    \vdots & \vdots & \ddots & \ddots & \vdots \\
    0 & 0 & \cdots & 0 & 1 \\
    0 & 0 & \cdots & 0 & 0
    \end{bmatrix}_{N\times N}, \text{and} \quad  \rvb_t:=\begin{bmatrix}
      0 \\
      0 \\
      \vdots \\
      1
      \end{bmatrix}
  \end{align}
If we expand the system, it basically represent:
\begin{align}
  \rd \xt{0}=\xt{1}\rd t\\
  \rd \xt{1}=\xt{2}\rd t\\
  \cdots\\
  \rd \xt{N-1}=F_t\rd t\\
  \end{align}

In our case, the $F_t$ function is:
\begin{align}
  F_t:=N!\frac{x_1-\sum_{i=0}^{N-1}\frac{\xt{i}}{i!}(1-t)^i}{(1-t)^{N}}
\end{align}

One can easily verify that, when $N=1$, it actually degenerate to flow matching:
\begin{align}
  F_t:=\frac{x_1-\rvx_t}{1-t}, \text{and} \\
  \rd \xt{0}=F_t \rd t
\end{align}

One can consider it as the higher augment dimension extension of flow matching model. And the magical part is that, we do not need to retrain model.

For better analysis, we rearange the system:
\begin{align}
  \frac{ \rd \rvx_t}{dt} &= \rmA_t\,\rvx_t + \rvb_t F_t\\
                       &=\rmA_t\,\rvx_t+\rvb_t N!\frac{x_1-\sum_{i=0}^{N-1}\frac{\xt{i}}{i!}(1-t)^i}{(1-t)^{N}}\\
                       &=\rmA_t\,\rvx_t+\frac{\rvb_t N!}{(1-t)^N}x_1-\frac{\rvb_t N!\sum_{i=0}^{N-1}\frac{\xt{i}}{i!}(1-t)^i}{(1-t)^{N}}\\
                       &=\rmA_t\,\rvx_t+\underbrace{\begin{bmatrix}
                        0\\
                        0\\
                        \vdots\\
                        \frac{N!}{(1-t)^N}
                       \end{bmatrix}}_{\hat{\rvb}_t}x_1-\underbrace{\begin{bmatrix}
                        0 & 0 & \cdots & 0 \\
                        0 & 0 & \cdots & 0 \\
                        \vdots & \vdots & \ddots & \vdots \\
                        \frac{(1-t)^0 N!}{0!(1-t)^N} & \frac{(1-t)^1 N!}{1!(1-t)^N} & \cdots & \frac{(1-t)^N N!}{(N-1)!(1-t)^N}
                        \end{bmatrix}}_{\tilde{\rmA}_t}\rvx_t \label{eq:ctr-A-b}\\
                        &:=\hat{\rmA}_t\rvx_t+\hat{\rvb}_t x_1, \quad (\hat{\rmA}_t:= \rmA_t-\tilde{\rmA}_t)
\end{align}

Consider the linear time‐varying system:
\begin{equation}\label{eq:xtdyn}
\frac{d \rvx_t}{dt} = \hat{\rmA}_t\,\rvx_t + \hat{\rvb}_t\,x_1, \quad \rvx_0\sim \mathcal{N}(0,I).
\end{equation}

Since (~\ref{eq:xtdyn}) is linear and deterministic (apart from the random initial condition), the state remains Gaussian. Its mean and covariance evolve as follows ~\cite{sarkka2019applied}.
\paragraph{Mean Dynamics.} Let 
\[
\rvm_t = \mathbb{E}[\rvx_t].
\]
Then
\begin{equation}\label{eq:meandyn}
\dot{\rvm}_t = \hat{\rmA}_t\,\rvm_t + \hat{\rvb}_t\,x_1, \quad m_0 = 0.
\end{equation}
We can write the mean in a factorized form as
\[
\rvm_t = \bmut\,x_1,
\]
so that by dividing by $x_1$, we obtain
\[
\dot{\bmut} = \hat{\rmA}_t\,\bmut + \hat{\rvb}_t.
\]

\paragraph{Covariance Dynamics.} Similarly, the convariance follows dynamics
\begin{equation}\label{eq:covdyn}
\dot{\bSigma}_t = \hat{\rmA}_t\,\bSigmat + \bSigmat\,\hat{\rmA}_t^T
\end{equation}
Recall that, 
Then we define the scalar quantity of interest as
\begin{equation}\label{eq:ytdef}
y_t := \frac{\Bigl(\bSigmat^{-1}\rvm_t\Bigr)^T \rvx_t}{\Bigl(\bSigmat^{-1}\rvm_t\Bigr)^T\bmut}
 =\frac{\Bigl(\bSigmat^{-1}\bmut\Bigr)^T \rvx_t}{\bmut^T\bSigmat^{-1}\bmut} 
 = \frac{\rvr_t^T \rvx_t}{\gamma_t}, \quad \text{with } \gamma_t:= \bmut^T\bSigmat^{-1}\bmut.
\end{equation}

We wish to compute the derivative of
\[
y_t = \frac{\rvr_t^T \rvx_t}{\gamma_t},
\]
Using the quotient rule,
\[
\dot{y}_t = \frac{\frac{d}{dt}\left(\rvr_t^T \rvx_t\right)\,\gamma_t - \left(\rvr_t^T \rvx_t\right)\,\dot{\gamma}_t}{\gamma_t^2}.
\]
Since $\rvr_t^T \rvx_t = y_t \gamma_t$, this becomes
\[
\dot{y}_t = \frac{\dot{\rvr}_t^T \rvx_t + \rvr_t^T \dot{x}_t}{\gamma_t} - y_t\,\frac{\dot{\gamma}_t}{\gamma_t}.
\]

\subsection*{Derivative of $\rvr_t$}

Recall that
\[
\rvr_t = \bSigmat^{-1}\bmut.
\]
Differentiating gives
\[
\dot{\rvr}_t = -\bSigmat^{-1}\dot{\bSigma}_t\,\bSigmat^{-1}\bmut + \bSigmat^{-1}\dot{\bmut}.
\]
Substitute the known dynamics:
\begin{align*}
\dot{\bSigma}_t &= \hat{\rmA}_t\,\bSigmat + \bSigmat\,\hat{\rmA}_t^T,\\
\dot{\bmut} &= \hat{\rmA}_t\,\bmut + \hat{\rvb}_t.
\end{align*}
It follows that
\begin{align}\label{eq:rtdot}
\dot{\rvr}_t &=-\bSigmat^{-1}\dot{\bSigma}_t\,\bSigmat^{-1}\bmut + \bSigmat^{-1}\dot{\bmut}\\
&=-\bSigmat^{-1}\left(\hat{\rmA}_t\,\bSigmat + \bSigmat\,\hat{\rmA}_t^T\right)\bSigmat^{-1}\bmut + \bSigmat^{-1}\left(\hat{\rmA}_t\,\bmut + \hat{\rvb}_t\right)\\
&=-\bSigmat^{-1}\hat{\rmA}_t\bmut-\hat{\rmA}_t^\T\bSigmat^{-1}\bmut+\bSigmat^{-1}\hat{\rmA}_t\bmut+\bSigmat^{-1}\hat{\rvb}_t\\
&=-\,\hat{\rmA}_t^\T \rvr_t + \bSigmat^{-1}\,\hat{\rvb}_t.
\end{align}

\subsection*{Derivative of $\rvx_t$}

From (~\ref{eq:xtdyn}),
\[
\dot{\rvx}_t = \hat{\rmA}_t\,\rvx_t + \hat{\rvb}_t\,\hat{x}_1.
\]

\subsection*{Derivative of $\gamma_t$}

Recall
\[
\gamma_t = \bmut^T\,\bSigmat^{-1}\,\bmut = \rvr_t^T\,\bmut.
\]
Differentiating,
\[
\dot{\gamma}_t = \dot{\rvr}_t^T\,\bmut + \rvr_t^T\,\dot{\bmut}.
\]
Using (~\ref{eq:rtdot}) and $\dot{\bmut} = \hat{A}_t\bmut + \hat{\rvb}_t$, one obtains (after cancellation) the result:
\begin{align}
\dot{\gamma}_t &=\left(-\,\hat{\rmA}_t^\T \rvr_t + \bSigmat^{-1}\,\hat{\rvb}_t\right)\bmut+\rvr^\T\left(\hat{\rmA}_t\bmut + \hat{\rvb}_t\right)\\
&=2\,\hat{\rvb}_t^T\,\rvr_t \\
&= 2\,\hat{\rvb}_t^T\,\bSigmat^{-1}\bmut
\end{align}

\subsection*{Combining Everything}

Substitute the pieces into
\[
\dot{y}_t = \frac{\dot{\rvr}_t^T \rvx_t + \rvr_t^T \dot{\rvx}_t}{\gamma_t} - y_t\,\frac{\dot{\gamma}_t}{\gamma_t}.
\]
Using
\begin{align*}
\dot{\rvr}_t^T &= -\,\rvr_t^T \hat{\rmA}_t + \hat{\rvb}_t^T\,\bSigmat^{-1},\\[1mm]
\rvr_t^T\dot{\rvx}_t &= \rvr_t^T\bigl(\hat{\rmA}_t \rvx_t + \hat{\rvb}_t\,\hat{x}_1\bigr),
\end{align*}
we have:
\begin{align*}
\dot{\rvr}_t^T \rvx_t + \rvr_t^T \dot{\rvx}_t 
&= \Bigl[-\,\rvr_t^T \hat{\rmA}_t\,\rvx_t + \hat{\rvb}_t^T\,\bSigmat^{-1}\,\rvx_t\Bigr]
+ \Bigl[\rvr_t^T \hat{\rmA}_t\,\rvx_t + \rvr_t^T\,\hat{\rvb}_t\,\hat{x}_1\Bigr] \\
&= \hat{\rvb}_t^T\,\bSigmat^{-1}\,\rvx_t + \rvr_t^T\,\hat{\rvb}_t x_1
\end{align*}
Thus,
\begin{equation}\label{eq:dy_final}
\dot{y}_t = \frac{\hat{\rvb}_t^T\,\bSigmat^{-1}\,\rvx_t + x_1\,\rvr_t^T\,\hat{\rvb}_t}{\gamma_t} - y_t\,\frac{\dot{\gamma}_t}{\gamma_t}.
\end{equation}
Recall that \(\gamma_t = \bmut^T\,\bSigmat^{-1}\,\bmut\) and \(\dot{\gamma}_t = 2\,\hat{\rvb}_t^T\,\bSigmat^{-1}\,\bmut\). Also note that
\[
\rvr_t^T\,\hat{\rvb}_t = \bmut^T\,\bSigmat^{-1}\,\hat{\rvb}_t.
\]
Thus, the final expression becomes
\begin{align}
\dot{y}_t &= \frac{\hat{\rvb}_t^T\,\bSigmat^{-1}\,\rvx_t + \hat{x}_1\,\bmut^T\,\bSigmat^{-1}\,\hat{\rvb}_t}{\bmut^T\,\bSigmat^{-1}\,\bmut} - y_t\,\frac{2\,\hat{\rvb}_t^T\,\bSigmat^{-1}\,\bmut}{\bmut^T\,\bSigmat^{-1}\,\bmut}\\
&=\underbrace{\frac{\hat{\rvb}_t^T\,\bSigmat^{-1}}{\bmut^T\,\bSigmat^{-1}\,\bmut}}_{\rve_t}\rvx_t+\frac{\bmut^T\,\bSigmat^{-1}\,\hat{\rvb}_t}{\bmut^T\,\bSigmat^{-1}\,\bmut}x_1-\frac{2\,\hat{\rvb}_t^T\,\bSigmat^{-1}\,\bmut}{\bmut^T\,\bSigmat^{-1}\,\bmut}y_t \label{eq:final_yt}
\end{align}
The frist term is essentially one kind of linear combination of $\rvx_t$, and Recally that $y_t=\rvr^\T \rvx_t:=\frac{\bmu_t^\T\bSigma_t^{-1}}{\bmu_t^\T\bSigma_t^{-1}\bmut}$ which is another linear combination of $\rvx_t$. Assume that $\rvx_t \sim \mathcal{N}(\bmut x_1,\bSigma_t)$, thus, one can derive the relationship between the frist term and $y_t$. Thus, according to Lemma.~\ref{lem:whitened_affine_noise}
\begin{align}
        \frac{\hat{\rvb}_t^T\,\bSigmat^{-1}}{\bmut^T\,\bSigmat^{-1}\,\bmut}\rvx_t&=\rve_t^\T\left[ \rmI -\frac{ \bSigma_t \rvr_t \rvr_t^\T}{\rvr_t^\T \bSigmat\rvr_t} \right]\bmut x_1+\frac{\rve_t^\T \bSigma_t \rvr_t}{\rvr_t^\T \bSigmat\rvr_t}y_t+\rve_t^\T \rmL_t \beps_{\perp},\\
        \beps_{\perp}&\sim\mathcal N\!\bigl(\mathbf0,\ I_d-\rmL_t^\T \rvr_t \rvr_t^\T \rmL_t/\rvr_t^\T \bSigma_t\rvr_t\bigr).
\end{align}
Thus, by plugging in the expression, one can get:
\begin{align}
        \dot{y}_t&=\alpha_t y_t+\beta x_1 +\rve_t^\T \rmL_t \beps_{\perp}\\
        &\approx\alpha_t y_t+\beta x_{\theta} +\rve_t^\T \rmL_t \beps_{\perp}
\end{align}
Where,
\begin{align}
        \alpha_t&=\frac{\rve_t^\T \bSigma_t \rvr_t}{\rvr_t^\T \bSigmat\rvr_t}-\frac{2\,\hat{\rvb}_t^T\,\bSigmat^{-1}\,\bmut}{\bmut^T\,\bSigmat^{-1}\,\bmut}\\
        \beta_t&=\frac{\bmut^T\,\bSigmat^{-1}\,\hat{\rvb}_t}{\bmut^T\,\bSigmat^{-1}\,\bmut}+\rve_t^\T\left[ \rmI -\frac{ \bSigma_t \rvr_t \rvr_t^\T}{\rvr_t^\T \bSigmat\rvr_t} \right]\bmut\\
        w_t^{(i)}&=(\rve_t^\T \rmL_t)^{(i)}
\end{align}
\end{proof}
\section{Experiment Details}\label{app:experiment details}
Here we elaborate more on experiment details. 
\subsection{EDM and EDM2}
For the baselines on EDM and EDM2 codebase, we directly use the code provide in DPM-Solver-v3~\citep{zheng2023dpm}. For the fair comparision, for all baselines, we controlled $\sigma_{\text{min}}=0.002$ and $\sigma_{\text{max}}=80$ as suggested in the original EDM and EDM2 paper.

For DPM-Solver++, we did abalation search over $\text{order}\in [1,2,3]$, $\text{discretization}\in [\text{logSNR},\text{ time uniform},,\text{edm},\text{time quadratic}]$.

For UniPC, we did abalation search over $\text{order}\in [1,2,3]$, $\text{discretization}\in [\text{logSNR},\text{ time uniform},,\text{edm},\text{time quadratic}]$,$\text{variant}\in[\text{bh1},\text{bh2}]$.

For all the ablation results, please see the supplementary material.
\subsection{Stable Diffusion 3}
For stable diffusion 3, we simply plug in the implementation of all the baselines provided in the Diffuser. We use latest HpsV2.1 to evaluate generate dresults.

\section{Additional Plots}
\subsection{General $N$ variable dynamics}\label{sec:N-variable-demo}
This section is not referenced in the main paper and will be removed soon; it is retained only for now to keep the appendix numbering aligned with the main paper.
\section{Detailed Explanations}
\subsection{Explicit form of $\rmA_t$ and $\rvb_t$}\label{sec:explicit-Ab}
Here we demonstrate the $\rmA_t$ and $\rvb_t$ used in AGM~\cite{chen2023generative} and CLD~\cite{dockhorn2021score}. Here we abuse the notation and inherent the notation from CLD.
\begin{table}[h]
  \centering
  \caption{Comparison of different solvers}
  \label{tab:solver-comparison}
  \begin{tabular}{@{}lccccc@{}}
    \toprule
    Algorithm  & $\rmA_t$ & $\rvb$ &$F_t$ \\
    \midrule
    AGM~\cite{dockhorn2021score}          & $\begin{bmatrix}
        0 &1\\
        0&0
    \end{bmatrix}$ & $[0,1]^\T$ & $\frac{-4}{t-1}\left(\frac{x1-\rvx_t^{(0)}}{1-t}-\rvx_t^{(1)}\right)$ \\
    CLD~\cite{dockhorn2021score}          & $\begin{bmatrix}
        0 & -M^{-1}\\
        1 & \Gamma M^{-1}\\
    \end{bmatrix}\beta$ & $[0,\Gamma\beta]^\T$ &$ \nabla_{\rvx_t^{(1)}}\log p(\rvx,t)$ \\
    \bottomrule
  \end{tabular}
\end{table}
\subsection{Explicit form of mean and convariance matrix}\label{app:explicit-mu-sigma}
Here we first quickly derive how the $F$ derived which is straight-forward. We know $x^{(0)}_t:=x_t$ be the position and define higher derivatives recursively
\[
x^{(k)}_t \;=\;\frac{d^k x^{(0)}_t}{dt^k}, 
\quad k=1,\dots,N-1.
\]
The system dynamics form an \emph{$N$th--order chain of integrators} driven by a
scalar input $F(t,\mathbf x_t)$:
\begin{align}
\dot x^{(0)}_t &= x^{(1)}_t,\nonumber\\
\dot x^{(1)}_t &= x^{(2)}_t,\nonumber\\[-4pt]
&\;\,\vdots\nonumber\\[-4pt]
\dot x^{(N-2)}_t &= x^{(N-1)}_t,\nonumber\\
\dot x^{(N-1)}_t &= F(t,\mathbf x_t).\label{eq:chain}
\end{align}
Equivalently, the position satisfies the scalar ODE
\[
\boxed{\displaystyle\frac{d^{N}x^{(0)}_t}{dt^{N}} = F(t,\mathbf x_t)}.
\]

Our goal is starting at some time $t\in[0,1)$ with known state
$\bigl\{x^{(k)}_t\bigr\}_{k=0}^{N-1}$, choose $F$ so that the position reaches a
prescribed value at $t=1$:
\[
x^{(0)}_{1}=x_1\quad(\text{``hit the target''}).
\]

Assume $F$ is \emph{held constant} over the \emph{remaining} interval
$[t,1]$.  Repeated integration  yields the degree-$N$
Taylor polynomial about~$t$:
\begin{equation}
x^{(0)}_{1}
=\sum_{k=0}^{N-1} \frac{(1-t)^{k}}{k!}\,x^{(k)}_{t}
+\frac{(1-t)^{N}}{N!}\,F.
\label{eq:taylor}
\end{equation}

Thus, one can simply solve the F by Rearranging \eqref{eq:taylor} to isolate $F$:
\[
F
=\frac{N!}{(1-t)^{N}}
\Bigl[
x_{1}-\sum_{k=0}^{N-1}\frac{(1-t)^{k}}{k!}\,x^{(k)}_{t}
\Bigr].
\]
Thanks to the simple form of $F$, one can readily write down the mean and covariance of the system.

Now we know
\[
A \;=\;
\begin{bmatrix}
0&1&0&\dots&0\\
0&0&1&\dots&0\\
\vdots&\vdots&\ddots&\ddots&\vdots\\
0&0&\dots&0&1\\
0&0&\dots&0&0
\end{bmatrix},
\qquad
\rvb_t \;=\;[0,\dots,0,1]^{\!\top}.
\]

By rearraging the dynamics, gives the \emph{linear}
time-varying closed loop

\begin{equation}\label{eq:control}
  \dot{\mathbf x}_t
  =\underbrace{\hat{\rmA}_t}_{A\text{ w/ control}}\;\mathbf x_t
  +\underbrace{\hat{\rvb}}_{b\text{ w/ control}}\;x_1
\end{equation}

We need first compute the transition matrix induced by $\hat{\rmA}_t$ and we call it controlled transition matrix. By
Solving \(\dot\Phi=\hat \rmA_t\Phi\) column-wise gives the polynomial matrix

\[
\Phi(t,0)=\bigl[T_{k,m}(t)\bigr]_{k,m=0}^{N-1},
\quad
T_{k,m}(t)=
\begin{cases}
\dfrac{t^{\,m-k}}{(m-k)!}-\dfrac{N!\,t^{\,N-k}}{(N-k)!\,m!}, & m\ge k,\\[8pt]
-\dfrac{N!\,t^{\,N-k}}{(N-k)!\,m!}, & m<k.
\end{cases}
\]

Plugging this \(\Phi(t,0)\) into the boxed formulas above supplies
\(\mu(t)\) and \(\Sigma(t)\) explicitly for \emph{every} order \(N\).

Let \(\mu(t)=\mathbb E[\mathbf x_t]\).
Because $\hat{\rvb}_t x_1$ is deterministic,

\begin{align}\label{app:mu_dyn}
\dot{\mu}(t)=\hat{\rmA}_t\,\mu(t)+\hat \rvb_t\,x_1,
\qquad
\mu(0)=\mu_0.    
\end{align}

Define the state–transition matrix 
\(\Phi(t,\tau)\) of \(\hat{\rmA}_t\):
\[
\dot\Phi(t,\tau)=\hat{\rmA}_t\,\Phi(t,\tau),\;
\Phi(\tau,\tau)=I.
\]
Then the standard variation-of-constants formula gives

\[
\bmut=\Phi(t,0)\,\bmu_0+\int_{0}^{t}\Phi(t,\tau)\,\hat{\rvb}_\tau\,x_1\,d\tau.
\]

If the initial derivatives are i.i.d.\ \(\mathcal N(0,1)\),
then \(\mu_0=\mathbf0\) and only the integral term remains.  
Carrying out the integral (polynomials of \(\tau\)) yields

\[
\boxed{\;
\bmu^{(k)}_t=\frac{N!\,t^{\,N-k}}{(N-k)!}\;x_1,
\qquad k=0,\dots,N-1.
}
\]




Meanwhile, the propagation of covariance matrix is:
\begin{align}\label{app:cov-dyn}
  \dot{\bSigma}_t=\hat{\rmA}_t\bSigmat+\bSigmat\hat{\rmA}_t^{\!\top},
  \quad
  \Sigma(0)=\Sigma_0.
\end{align}

Eq.~\ref{app:cov-dyn} is a homogeneous Lyapunov ODE whose unique solution is exactly (see Appendix.~\ref{app:Lyapunov} for more details):
\[
\boxed{\;
  \bSigma_t=\Phi(t,0)\,\bSigma_0\,\Phi(t,0)^{\!\top}.
}
\]
\subsection{Previous Fast Solver}\label{app:fast-solver}
\begin{table}[h]
  \centering
  \caption{Comparison of different solvers}
  \label{tab:solver-comparison}
  \begin{tabular}{@{}lccccc@{}}
    \toprule
    & Order type  & Order & Multistep type  & Expansion term  & Discretize space \\
    \midrule
    Heun~\cite{karras2022elucidating}          & Single Step & $2$ & N/A & N/A & $\sigma_t$ \\
    DEIS~\cite{zhang2022fast}          & Multi-step & $2/3/4$ & Adams–Bashforth & $\epsilon_{\theta}$ & $\sigma_t$ \\
    DPM-Solver~\cite{lu2022dpm}    & Multi/Single-step & $2/3/4$ & Adams–Bashforth & $\epsilon_{\theta}$ &Optional\\
    DPM-Solver++~\cite{lu2022dpmpp}  & Multi/Single-step & $2/3/4$ & Adams–Bashforth & $x_{\theta}$&Optional \\
    UniPC~\cite{zhao2023unipc}         & Multi-step & $3/4/5$ & Adams–Moulton & $x_{\theta}$ &Optional\\
    TADA(ours)          & Multi-step & $2/3$ &Adams–Bashforth  & $F_{\theta}$ & $t$\\
    \bottomrule
  \end{tabular}
\end{table}
\subsection{Extended Flow Matching}\label{app:ext-fm}
In the framework of flow mathcing, one obtain the velocity by $v_t=\frac{x_1-x_t}{1-t}$ because it is the linear interpolation between data $x_1$ and prior $x_0$. And meanwhile, it happens to be the solution of optimal control problem:
\begin{align}
    \min_{v_t}\int_{t}^{1} \norm{v_t}_2^2\rd t, \quad s.t \quad  \rd x_t =v_t \rd t
\end{align}
For the detailed derivation, please see Sec.C.1 in ~\citep{chen2023generative}.

For AGM, they consider a momentum system, which reads
\begin{align}
    \min_{a_t}\int_{t}^{1} \norm{a_t}_2^2\rd t, \quad s.t \quad  \rd x_t =v_t \rd t, \quad \rd v_t =a_t \rd t+\rd w_t
\end{align}
 The differences is that,  AGM consider the injection of stochasticity in the velocity channel. For our case, the $F_{\theta}$ derived in Sec.~\ref{sec:N-var-sampling}, is the solution for 

\begin{align*}
    \min_{F_t}\int_{t}^{1} \norm{F_t}_2^2\rd t\\
    \quad \rd \xt{0}=\xt{1}\rd t\\
      \rd \xt{1}=\xt{2}\rd t\\
      \cdots\\
      \rd \xt{N-1}=F_t\rd t
\end{align*}
and its spirit keeps same as previous formulation, move $x_t^{(0)}$ to $x_1^{0}\sim \pdata$ from $t=0$ to $t=1$.
\subsection{Degenerate Case of \method}\label{app:degenerate-MSD}
Here we discuss about the degenerated case of TADA. The reasoning behind it is rather simple. We show the dynamics of $y_t$ (eq.~\ref{eq:final_yt}) again here,
\begin{align}
\dot{y}_t &=\underbrace{\frac{\hat{\rvb}_t^T\,\bSigmat^{-1}}{\bmut^T\,\bSigmat^{-1}\,\bmut}}_{\rve_t}\rvx_t+\frac{\bmut^T\,\bSigmat^{-1}\,\hat{\rvb}_t}{\bmut^T\,\bSigmat^{-1}\,\bmut}x_1-\frac{2\,\hat{\rvb}_t^T\,\bSigmat^{-1}\,\bmut}{\bmut^T\,\bSigmat^{-1}\,\bmut}y_t
\end{align}
and recall that 
\begin{align}
    y_t= \frac{\bmut^\T \bSigma_t^{-1}}{\bmut^\T \bSigma_t^{-1}\bmut}
\end{align}
Thus, if the first term depends exclusively on \(y_t\), the system reduces to the scalar ODE for \(y_t\) and becomes formally identical to other diffusion-model parameterizations such as VP, VE, or FM.  More precisely, in order to degenerate TADA, one only requires
\begin{equation}
    \bmu_t \;\propto\; \hat{\rvb}_t
\end{equation}
where \(\hat{\rvb}_t\) is defined in eq.~\ref{eq:ctr-A-b}.  This proportionality holds in two scenarios:

1. When \(N=1\), so that \(\bmu_t\) and \(\hat{\rvb}_t\) are scalars.  In that case, the framework collapses to flow matching—a mere reparameterization of the diffusion model.

2. When \(\mathbf A_t\) is diagonal and its components evolve independently.  Then every dimension of \(\bmu_t\) and \(\hat{\rvb}_t\) shares the same mean and variance, and proportionality follows directly.

A simple empirical check is to propagate the model from different random initializations using our formulation: it yields identical FID scores after generation, confirming the degeneracy.
\section{Additional Qualitative Comparision}\label{app:additional-comparision}
Please see fig.~\ref{App:edm2_additional_viz} and fig.~\ref{App:edm2_additional_viz_2}.
\begin{figure}[h]
  \centering
  \includegraphics[width=\linewidth]{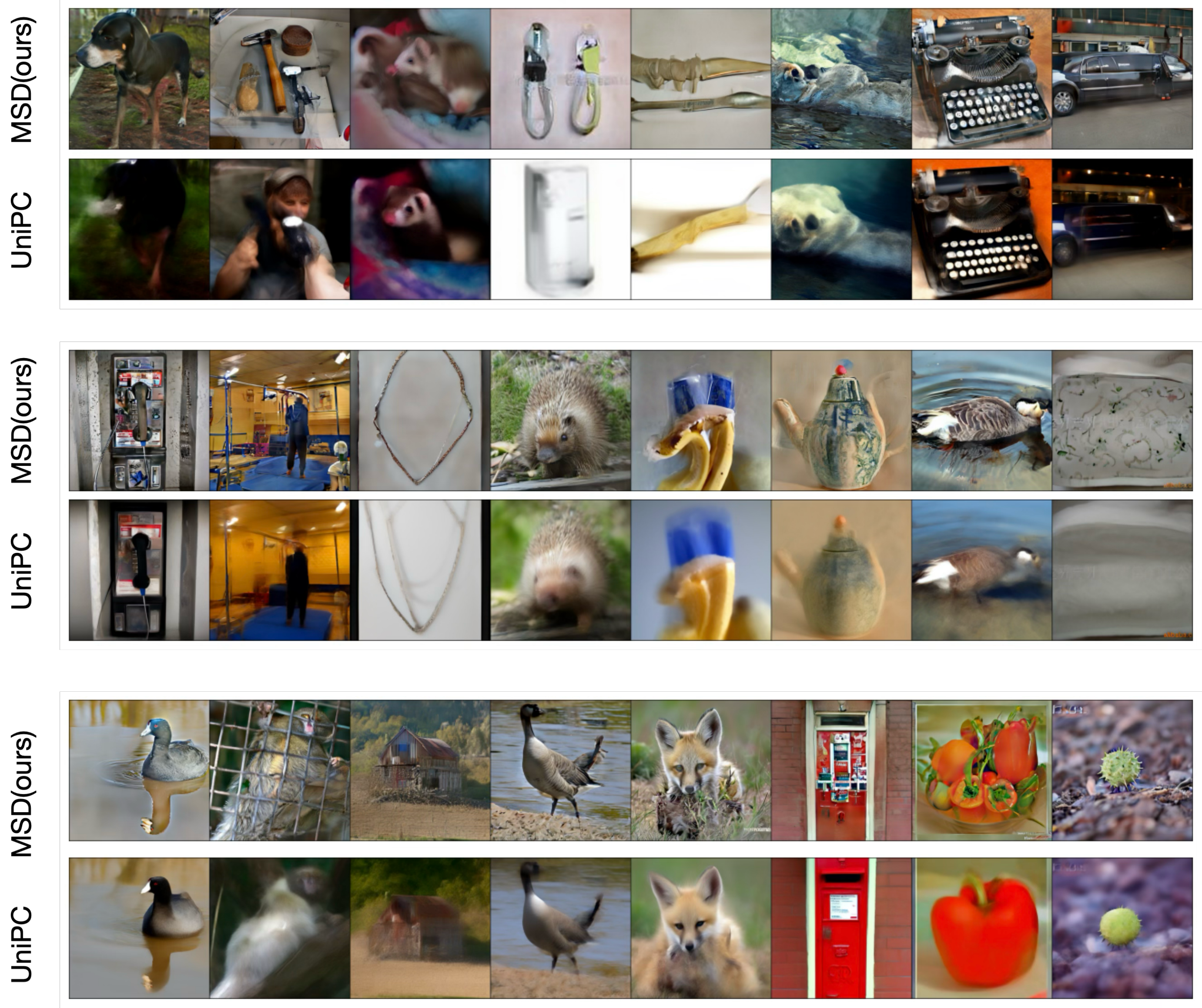}
  \caption{Additional Visual comparision with UniPC using EDM2 w/ 5 NFEs.}
  \label{App:edm2_additional_viz_2}
\end{figure}

\begin{figure}[h]
  \centering
  \includegraphics[width=\linewidth]{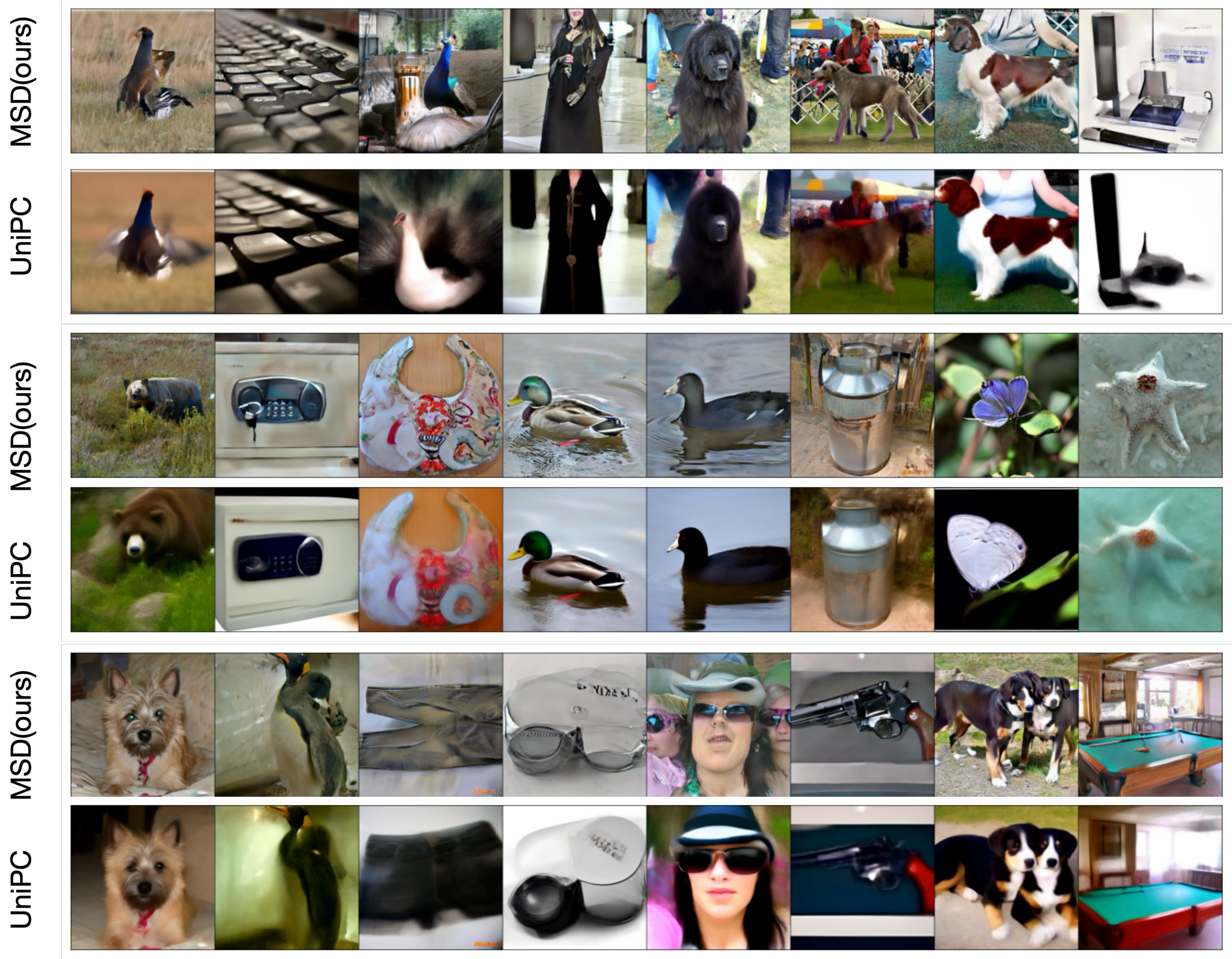}
  \caption{Additional Visual comparision with UniPC using EDM2 w/ 5 NFEs.}
  \label{App:edm2_additional_viz}
\end{figure}
\section{Solution of the homogeneous Lyapunov ODE}\label{app:Lyapunov}

Let $\Phi(t,\tau)$ be the \emph{state–transition matrix} of the (possibly
time–varying) coefficient $A_t$:
\[
\dot{\Phi}(t,\tau)=A_t\,\Phi(t,\tau),\qquad
\Phi(\tau,\tau)=I.
\]
Throughout we abbreviate $\Phi(t,0)\equiv\Phi(t)$.

\paragraph{1.\;Candidate solution.}
Consider
\[
\Sigma(t)\;=\;\Phi(t)\,\Sigma_0\,\Phi(t)^{\!\top}.
\]

\paragraph{2.\;Verification.}
Differentiate previous equation and use the product rule together with
$\dot{\Phi}(t)=A_t\Phi(t)$ and
$\tfrac{d}{dt}\Phi(t)^{\!\top}=\Phi(t)^{\!\top}A_t^{\!\top}$:
\[
\begin{aligned}
\dot{\Sigma}(t)
   &=\dot{\Phi}\,\Sigma_0\Phi^{\!\top}
     +\Phi\,\Sigma_0\dot{\Phi}^{\!\top}\\
   &=A_t\Phi\Sigma_0\Phi^{\!\top}
     +\Phi\Sigma_0\Phi^{\!\top}A_t^{\!\top}\\
   &=A_t\Sigma(t)+\Sigma(t)A_t^{\!\top}.
\end{aligned}
\]
Hence $\Sigma(t)$ satisfies the differential equation in~(Lyap), and
$\Sigma(0)=\Phi(0)\Sigma_0\Phi(0)^{\!\top}=\Sigma_0$.

\paragraph{3.\;Uniqueness.}
Lyapunov Equation is linear in the matrix variable~$\Sigma$; by the
Picard–Lindel\"of theorem its solution is unique.
Therefore (S) is \emph{the} solution.

\section{Correlation between two Gaussian Variable}
\begin{lemma}
\label{lem:whitened_affine_noise}
Let the random vector
\[
    \mathbf x_t \;\sim\;\mathcal N\!\bigl(\boldsymbol\mu\,x_1,\;\boldsymbol\Sigma_t\bigr),
    \qquad
    \boldsymbol\Sigma_t = \rmL_t \rmL_t^{\!\top}\quad(\text{Cholesky factorisation}).
\]
For two fixed column-vectors $\mathbf r,\mathbf e\in\mathbb R^{d}$ set
\[
    y_t := \mathbf r_t^{\!\top}\mathbf x_t, 
    \quad
    z_t := \mathbf e_t^{\!\top}\mathbf x_t .
\]
Write the convenient abbreviations

\[
    c_t := \mathbf r_t^{\!\top}\boldsymbol\mu_t\,x_1,
    \quad
    d_t := \mathbf e_t^{\!\top}\boldsymbol\mu_t\,x_1,
    \quad
    \mathbf g_t := \rmL_t^{\!\top}\mathbf r_t,
    \quad
    \mathbf h_t := \rmL_t^{\!\top}\mathbf e_t,
    \quad
    \sigma_y^{2} := \|\rvg_t\|^{2}.
\]

Then
\[
    \boxed{
        z_t=\rve_t^\T\left[ \rmI -\frac{ \bSigma_t \rvr_t \rvr_t^\T}{\rvr_t^\T \bSigmat\rvr_t} \right]\bmut x_1+\frac{\rve_t^\T \bSigma_t \rvr_t}{\rvr_t^\T \bSigmat\rvr_t}y_t+\rve_t^\T \rmL_t \beps_{\perp},
        \qquad
        \beps_{\perp}\sim\mathcal N\!\bigl(\mathbf0,\ I_d-\rmL_t^\T \rvr_t \rvr_t^\T \rmL_t/\rvr_t^\T \bSigma_t\rvr_t\bigr).
    }
\]

\end{lemma}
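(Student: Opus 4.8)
The plan is to realize the claimed decomposition as an orthogonal splitting of the underlying white noise, which is just the jointly-Gaussian conditioning formula written in Cholesky coordinates. First I would use the reparameterization $\rvx_t = \bmut x_1 + \rmL_t\beps$ with $\beps\sim\mathcal N(\mathbf 0, I_d)$, so that both scalars become affine in $\beps$:
\[
y_t = c_t + \rvg_t^\T\beps, \qquad z_t = d_t + \rvh_t^\T\beps,
\]
with $\rvg_t=\rmL_t^\T\rvr_t$ and $\rvh_t=\rmL_t^\T\rve_t$ exactly as in the statement. The lemma then reduces to expressing $\rvh_t^\T\beps$ through $\rvg_t^\T\beps$ plus an independent remainder.

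Next I would project $\beps$ onto the direction $\rvg_t$, writing $\beps = \tfrac{\rvg_t\rvg_t^\T}{\|\rvg_t\|^2}\beps + \beps_\perp$ with $\beps_\perp := P_\perp\beps$ and $P_\perp := I_d - \tfrac{\rvg_t\rvg_t^\T}{\|\rvg_t\|^2}$. Substituting $\rvg_t^\T\beps = y_t - c_t$ into $z_t$ gives
\[
z_t = d_t + \frac{\rvh_t^\T\rvg_t}{\|\rvg_t\|^2}\,(y_t - c_t) + \rvh_t^\T\beps_\perp .
\]
It then remains to translate coefficients back into $\bSigmat,\rvr_t,\rve_t$ via $\rvh_t^\T\rvg_t = \rve_t^\T\bSigmat\rvr_t$, $\|\rvg_t\|^2 = \rvr_t^\T\bSigmat\rvr_t$, $c_t = \rvr_t^\T\bmut x_1$ and $d_t = \rve_t^\T\bmut x_1$. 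Grouping the constant part as $\rve_t^\T[\rmI - \bSigmat\rvr_t\rvr_t^\T/(\rvr_t^\T\bSigmat\rvr_t)]\bmut x_1$, reading off the $y_t$-coefficient $\rve_t^\T\bSigmat\rvr_t/(\rvr_t^\T\bSigmat\rvr_t)$, and rewriting $\rvh_t^\T\beps_\perp = \rve_t^\T\rmL_t\beps_\perp$, reproduces the boxed identity verbatim.

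Finally I would verify the two distributional claims. Since $P_\perp$ is symmetric and idempotent, $\Cov(\beps_\perp) = P_\perp P_\perp^\T = P_\perp = I_d - \rmL_t^\T\rvr_t\rvr_t^\T\rmL_t/(\rvr_t^\T\bSigmat\rvr_t)$, using $\rvg_t\rvg_t^\T = \rmL_t^\T\rvr_t\rvr_t^\T\rmL_t$; this is exactly the stated covariance. Moreover $\Cov(\rvg_t^\T\beps,\beps_\perp) = \rvg_t^\T P_\perp = \mathbf 0$, so by joint Gaussianity $\beps_\perp$ is independent of $y_t$, making the display a genuine signal-plus-independent-noise representation of $z_t$ given $y_t$ rather than a mere algebraic rewriting. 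I do not anticipate a substantive obstacle here: the whole argument is a one-dimensional Gaussian projection, and the only care needed is bookkeeping—keeping the Cholesky factor $\rmL_t$ attached to $\beps_\perp$ throughout so that the residual covariance comes out in the $\rmL_t^\T\rvr_t\rvr_t^\T\rmL_t$ form printed in the statement rather than in the raw $\rvg_t\rvg_t^\T$ form.
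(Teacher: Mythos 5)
Your proposal is correct and follows essentially the same route as the paper's own proof: reparameterize via $\rvx_t=\bmut x_1+\rmL_t\beps$, project $\beps$ onto $\rvg_t=\rmL_t^\T\rvr_t$, substitute $\rvg_t^\T\beps=y_t-c_t$, and translate the coefficients back into $\bSigmat$, $\rvr_t$, $\rve_t$ notation. Your explicit verification of the residual covariance via idempotence of $P_\perp$ and of independence via $\rvg_t^\T P_\perp=\mathbf 0$ is a slightly more careful rendering of what the paper asserts in passing, but it is the same argument.
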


\begin{proof}
\[
  y_t = c_t + \rvg_t^\T \epsilon, 
  \qquad  
  z_t = d_t + \rvh_t^\T \epsilon .
\]

Any vector can be decomposed into the component along
$\mathbf s$ and the component orthogonal to $\mathbf s$:
\[
  \boldsymbol\epsilon
  = \frac{\rvg_t}{\sigma_y^{2}}\,
      (\rvg_t^\T\boldsymbol\epsilon)
    + \boldsymbol\epsilon_{\!\perp},
  \qquad
  \rvg_t^\T \boldsymbol\epsilon_{\!\perp}=0
\]
Because $\boldsymbol\epsilon\sim\mathcal N(\mathbf0,I_d)$ and the projector
onto $\rvg_t$ is orthogonal to the projector onto the complement,
$\rvg_t^\T \boldsymbol\epsilon$ and
$\boldsymbol\epsilon_{\!\perp}$ are independent Gaussian variables.

Insert $\rvg_t^\T \boldsymbol\epsilon = y_t-c_t$ to obtain
\[
  \boldsymbol\epsilon
  = \frac{\rvg_t}{\sigma_y^{2}}\,(y_t-c_t) + \boldsymbol\epsilon_{\!\perp},
  \qquad
  \boldsymbol\epsilon_{\!\perp}\sim
     \mathcal N\!\bigl(\mathbf0,\ I_d-\rvg_t\rvg_t^\T/\sigma_y^{2}\bigr).
\]

Then we can plug this into $z_t$:
\begin{align}
    z_t&=d_t + \rvh_t^\T \Bigl(\frac{\rvg_t}{\sigma_y^{2}}\,(y_t-c_t)+ \boldsymbol\epsilon_{\!\perp}\Bigr)\\
    &=\rve_t^\T \bmut x_1+\frac{\rvh_t^\T\rvg_t}{\sigma_y^2}(y_t-c_t)+\rvh_t^\T \beps_{\perp}\\
    &=\rve_t^\T \bmut x_1+\frac{\rvh_t^\T\rvg_t}{\sigma_y^2}y_t-\frac{\rvh_t^\T\rvg_t \rvr_t^\T \bmut}{\sigma_y^2}x_1+\rvh_t^\T \beps_{\perp}\\
    &=\rve_t^\T \bmut x_1+\frac{\rve_t^\T \bSigma_t \rvr_t}{\rvr_t^\T \bSigmat\rvr_t}y_t-\frac{\rve_t^\T \bSigma_t \rvr_t}{\rvr_t^\T \bSigmat\rvr_t} \left(\rvr_t^\T \bmu_t \right)x_1+\rve_t^\T \rmL_t \beps_{\perp}\\
    &=\rve_t^\T\left[ \rmI -\frac{ \bSigma_t \rvr_t \rvr_t^\T}{\rvr_t^\T \bSigmat\rvr_t} \right]\bmut x_1+\frac{\rve_t^\T \bSigma_t \rvr_t}{\rvr_t^\T \bSigmat\rvr_t}y_t+\rve_t^\T \rmL_t \beps_{\perp}
\end{align}
\end{proof}

\section{General $N$ variable MDM loss}\label{app:N-var-MDM-loss}
At time $t$ the $N$-variables are generated by  
\[
\boxed{\; \mathbf x_t \;=\; \boldsymbol\mu_t\,x_1 \;+\; \rmL_t\,\boldsymbol\varepsilon \;}
\qquad
\boldsymbol\varepsilon\sim\mathcal N(\mathbf 0, I_N),
\]
with known $\boldsymbol\mu_t\in\mathbb R^{N}$ and \emph{invertible}
$\rmL_t\in\mathbb R^{N\times N}$.  
The goal is to predict
\(
\varepsilon^{(N-1)}
\).

By whitening trick, one can isolate $\beps^{(N-1)}$.
Let $\rve_{N-1}^\top=[0,\dots,0,1]$ be the row vector that selects the last
coordinate.  Left-multiplying by $\rve_{N-1}^\top \rmL_t^{-1}$ gives
\[
\rve_{N-1}^\top \rmL_t^{-1}\mathbf x_t
   \;=\;
\rve_{N-1}^\top \rmL_t^{-1}\boldsymbol\mu_t\,x_1
   \;+\;
\rve_{N-1}^\top\underbrace{\rmL_t^{-1}L_t}_{I_N}\boldsymbol\varepsilon .
\]
Define the \emph{time-dependent scalars}
\[
\rva_t^{\!\top}:=\rve_{N-1}^\top \rmL_t^{-1}\in\mathbb R^{N},
\quad
b_t:=\rva_t^{\!\top}\boldsymbol\mu_t\neq0 ,
\]
then
\begin{align}\label{eq:MDM-N-var}
    \beps^{(N-1)}
   \;=\;
   \rva_t^{\!\top}\mathbf x_t \;-\; b_t\,x_1 .
\end{align}

A neural network $\varepsilon_\theta(\mathbf x_t,t)$ is trained to
approximate $\varepsilon^{(N-1)}$ with the standard
\[
\mathcal L_{\text{MDM}}(\theta)
  :=\mathbb E\norm{\varepsilon_\theta(\mathbf x_t,t)-\beps^{(N-1)}}^2_2
\]

Insert eq.~\ref{eq:MDM-N-var} and multiply the interior by $b_t$:
\begin{align*}
\mathcal L_{\text{MDM}}(\theta)
 &=\mathbb E\norm{
          \varepsilon_\theta(\mathbf x_t,t)
          -\rva_t^{\!\top}\mathbf x_t
          +b_t\,x_1
    }_2^2\\
 &\propto\mathbb E\norm{
         g_\theta(\mathbf x_t,t)-x_1
    }_2^2
\end{align*}
where 
\[
   g_\theta(\mathbf x_t,t)
     := -\frac{\varepsilon_\theta(\mathbf x_t,t)-\rva_t^{\!\top}\mathbf x_t}{b_t}
\]


\end{document}